\theoremstyle{definition}
\newtheorem{defn}{Definition}[section]
\newtheorem{prop}{Proposition}[section]
\theoremstyle{remark}
  \providecommand\BibTeX{{%
    \normalfont B\kern-0.5em{\scshape i\kern-0.25em b}\kern-0.8em\TeX}}}
\begin{document}

\title{Learn to Tour: Operator Design For Solution Feasibility Mapping in Pickup-and-delivery Traveling Salesman Problem}


\author{Bowen Fang}
\affiliation{%
  \institution{Columbia University}
  \city{New York}
   \country{United States}
}
\email{bf2504@columbia.edu}

 \author{Xu Chen}
 \affiliation{
   \institution{Columbia University}
   \city{New York}
   \country{United States}}
 \email{xc2412@columbia.edu}

  \author{Xuan Di}
 \affiliation{
   \institution{Columbia University}
   \city{New York}
   \country{United States}}
 \email{sharon.di@columbia.edu}

\begin{abstract}
This paper aims to develop a learning method for a special class of traveling salesman problems (TSP), namely, the pickup-and-delivery TSP (PDTSP), which finds the shortest tour along a sequence of one-to-one pickup-and-delivery nodes. One-to-one here means that the transported people or goods are associated with designated pairs of pickup and delivery nodes, in contrast to that indistinguishable goods can be delivered to any nodes. In PDTSP, precedence constraints need to be satisfied that each pickup node must be visited before its corresponding delivery node. Classic operations research (OR) algorithms for PDTSP are difficult to scale to large-sized problems. Recently, reinforcement learning (RL) has been applied to TSPs. The basic idea is to explore and evaluate visiting sequences in a solution space. However, this approach could be less computationally efficient, as it has to potentially evaluate many infeasible solutions of which precedence constraints are violated. To restrict solution search within a feasible space, we utilize operators that always map one feasible solution to another, without spending time exploring the infeasible solution space. Such operators are evaluated and selected as policies to solve PDTSPs in an RL framework. We make a comparison of our method and baselines, including classic OR algorithms and existing learning methods. Results show that our approach can find tours shorter than baselines. 
\end{abstract}

\keywords{Feasible Tours, Reinforcement Learning, Operator}



\maketitle

\section{Introduction}
\label{sec:intro}
The pickup-and-delivery traveling salesman problem (PDTSP) finds a minimum cost tour (i.e., a sequence of pickup and their corresponding delivery nodes) on a graph \cite{Savelsbergh1990pd}. There exists a one-to-one correspondence between a pickup and a delivery node, where precedence constraints need to be fulfilled that each pickup node must be visited before its corresponding delivery node. In other words, passengers cannot be dropped off before getting picked up.

PDTSP has important implications for real-world applications such as flexible shuttle and vendor delivery services. In a flexible shuttle system shown in Figure~\ref{subfig:ride}, two passengers request a shared ride service during the same time window. The shuttle, departing and returning the same depot, needs to find an optimal sequence to transport these two customers from their origins to respective destinations using a shortest travel time or minimum distance traveled. Here $depot\rightarrow o_1 \rightarrow o_2 \rightarrow d_2 \rightarrow d_1 \rightarrow depot$ forms a feasible pick-up and delivery sequence. Similarly, in one-to-one delivery service shown in Figure~\ref{subfig:delivery}, a truck, starting and ending from the same depot, is tasked to deliver goods between two source and two target warehouses, with a goal of minimizing total fuel consumption (which could be a function of total distance traveled). Here $depot \rightarrow s_1 \rightarrow t_1 \rightarrow s_2 \rightarrow t_2 \rightarrow depot$ forms a feasible sequence.

\begin{figure}[h]
    \centering
    \vspace{-0.5cm}
    \subfloat[Flexible shuttle service where $o$ is origin and $d$ is destination.]{\includegraphics[scale=.4]{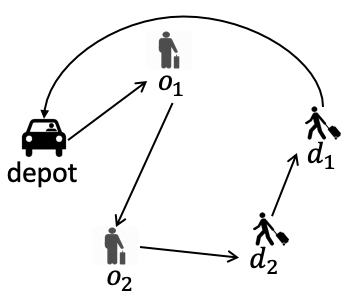}\label{subfig:ride}}
    \hspace{0.6cm}
    \subfloat[Vendor delivery service where $s$ is source and $t$ is target warehouse.]{\includegraphics[scale=.4]{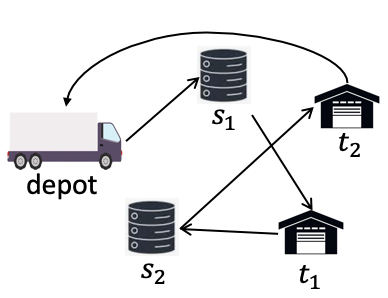}\label{subfig:delivery}}
    \vspace{-0.2cm}
    \caption{Motivating examples}
    \label{fig:moti}
    \vspace{-0.5cm}
\end{figure}

Finding optimal or near-optimal tours in PDTSP is challenging \cite{pacheco2022pdtsp}, because there could exist a large number of infeasible solutions (i.e., visiting sequences that violate precedence constraints). To elaborate, for a PDTSP with $n$ pickup-delivery node pairs (i.e., $\mathcal{N}=2n$ nodes), the total number of visiting sequences is $(2n)!$, while feasible ones only account for $\frac{(2n)!}{2^n}$ \cite{ruland1994pd}. As the problem size increases, $\lim_{n \rightarrow \infty} \frac{(2n)!/2^n}{(2n)!}=0$, indicating that feasible solutions account for merely a small portion of all the visiting sequences. Thus, searching for feasible solutions within the feasible space is challenging in PDTSP.

\begin{figure}[h]
   \centering
   \vspace{-0.1in}
   \includegraphics[scale=.115]{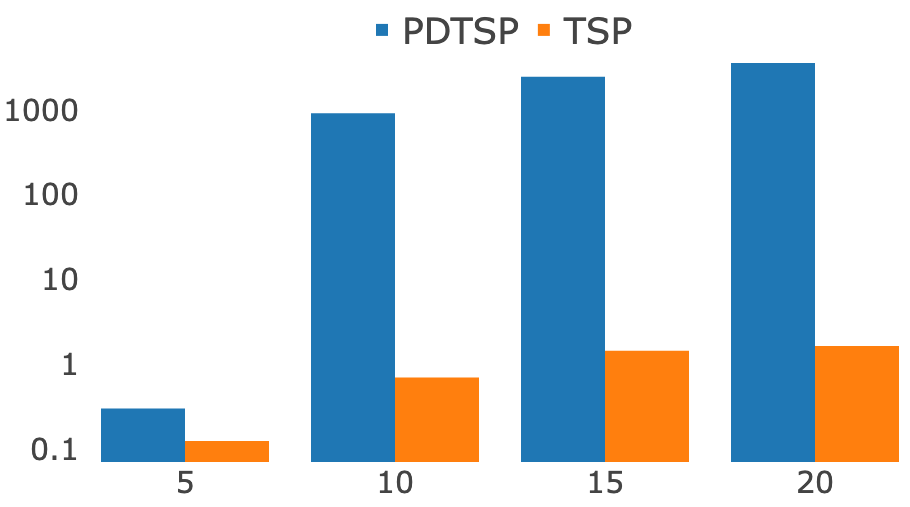}
   \includegraphics[scale=.338]{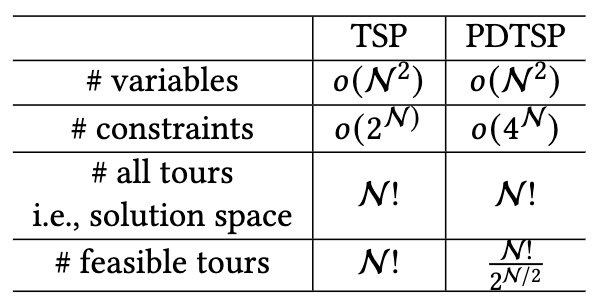}
   \vspace{-0.15in}
    \caption{PDTSP vs TSP. In the left bar chat, we list the computational time for PDTSP and TSP. The x-axis denotes the number of node $|\mathcal{N}|$ and the y-axis is the log of the compulational time (s). As the size of instance grows, the solving time for PDTSP is thousand times longer than TSP. We list the number of variables, constraints and feasible tours given the same number of nodes $|\mathcal{N}|$ to be visited.}
    \label{fig:cmp_fig1}
    \vspace{-0.2in}
\end{figure}

To tackle such a challenge, this paper leverages learning operators that consistently map one feasible solution to another. The learning operators enable us to efficiently generate feasible tours without exploring the entire space dominated by infeasible solutions. Our contributions are as follows:
\begin{enumerate}
    \item We propose a unified operator set containing various operators that map one feasible tour to another by confining the solution search within a feasible space. We prove that operators in the unified operator set can always generate feasible Hamiltonian cycles in PDTSP.
    \item We utilize the unified operator set as the action set in an RL framework for solving PDTSP, where operators are evaluated and selected as policies given the current state (i.e., a feasible solution of PDTSP).
    \item We demonstrate the effectiveness of our method by conducting experiments on various problem scales. We provide a comparative analysis of computational efficiency and solution quality between our method and off-the-shelf solvers.
\end{enumerate}

\begin{table}[ht]
\vspace{-0.2in}
\begin{center}
	\begin{tabular}{c|c|c|c|c}
	\hline
         & Setup & \makecell{Operator \\ Design} & \makecell{Problem \\ scale} & \makecell{Training \\time}  \\
        \hline
         Ours & PDTSP & \makecell{Unified\\ operators}  & $\mathcal{N}=501$ & 12 hours \\
         \hline
         \cite{ma2022pd} & PDTSP & \makecell{Single \\operator} & $\mathcal{N}=101$ & 7 days\\ 
        \hline
         \cite{lu2020A} & CVRP & \makecell{Intra-\& inter \\vehicle \\operators}& $\mathcal{N}=101$ &--\\
        \hline 
	\end{tabular}
\end{center}
\caption{Existing frameworks with learning operator}\label{table:compare_tsp}
\end{table}
\vspace{-0.3in}
The rest of this paper is organized as: Section \ref{sec:related} discusses related work on PDTSP. Section \ref{sec:pre} introduces preliminaries. In Section \ref{sec:method}, we present our methodology, where learning operators are designed within an RL framework. The solution approach is detailed in Section \ref{sec:sol}, followed by numerical results in Section \ref{sec:numer}. Section \ref{sec:conclu} concludes.

\section{Related Work}
\label{sec:related}
Classic OR algorithms for PDTSP include exact and heuristic methods. The widely used exact method is the branch-and cut algorithm \cite{dumi2009tsp}, which solves the exact solution of PDTSP via a binary integer programming \cite{ruland1994pd}. However, the exact method is difficult to scale to large-sized problems, because the number of precedence constraints significantly increases as the number of pickup-and-delivery nodes increases. Heuristic methods, adapted from neighbourhood search, are also proposed to solve PDTSP within a reasonable time \cite{renaud2000pd,renaud2002pd,veenstra2017pdtsp}. The well-known Lin-Kernighan-Helsgaun (LKH) solver \cite{lin1973AnEH} is extended to LKH3.0 \cite{lkh3,pacheco2022pdtsp} to solve TSP invariants including PDTSP.   

Recent years have seen a growing trend of using learning methods to solve classic TSPs. Most of them use an end-to-end approach in which deep neural networks that output TSP solutions are trained using either supervised learning \cite{vinyals2015ptr,Joshi2019graph,xin2021neurolkh} or reinforcement learning (RL) \cite{bello2016rl,dai2017rl,nazari2018rl,kool2018attention,miki2018rl,barrett2020rl,Xing2020TS,Duan2020EfficientlyST}. These learning frameworks requires a large number of problem instances during the training process and cannot outperform classic OR algorithms (e.g., LKH3) in test instances \cite{lu2020A}. Many other studies utilize RL to enhance neighbourhood search for classic TSPs and VRPs \cite{deudon2018tsp,chen2019encode,wu2019rl,Costa2020rl,Zheng2020CombiningRL,ma2021a,zong2022multi,Zheng2023lkh}. However, these approaches do not consider precedence constraints, and thus are not applicable to PDTSP. 

To accommadate precedence constraints, \cite{lixijun2021pd,lijing2022pd} utilize a mask mechanism in which delivery nodes are masked in policy networks before the corresponding pickup nodes are visited. 
To make policy networks identify a tour, a large number of solutions, including both feasible and infeasible ones, have to be sampled from the entire solution space. The training process could be time-consuming, especially when infeasible solutions dominate the space. To tackle this issue, researchers seek operators that confine searching within a feasible solution space and reduce the search space in PDTSP. To search within the feasible space, \cite{carrabs2007operator} utilizes sub-sequences containing only corresponding pickup-and-delivery nodes and swap node pairs to obtain a new tour. \cite{ma2022pd} adopts an insertion operator that place nodes on positions fulfilling precedence constraints, in order to explore feasible solutions.

Here we extend the operators in \cite{carrabs2007operator,ma2022pd} to a unified operator set containing  various admissible operators that transform one feasible tour to another. Instead of using one operator (e.g., insertion) to search solutions, we use the unified operator set in an RL framework to explore the solution space.








\section{Problem Statement}
\label{sec:pre}
In this section, we briefly introduce PDTSP and define terminologies related to solutions of PDTSP. 

\subsection{A primer on pickup and delivery traveling salesman problem (PDTSP)}
PDTSP is defined on a graph with pickup and delivery nodes. 
On a graph $\mathcal{G}=\{\mathcal{N},\mathcal{L}\}$ where $\mathcal{N}$ is the node set and $\mathcal{L}$ is the edge set, a vehicle, starting from a depot node $0$, has to fulfill $n$ requests corresponding to each pickup-and-delivery nodes pairs $(i,n+i)$ ($1\leqslant i \leqslant n$). The pickup and delivery node sets are denoted by $\mathcal{P}=\{1,\cdots,n\}$ and $\mathcal{D}=\{n+1,\cdots,2n\}$, respectively. 
Thus, the node set $\mathcal{N}=\ \mathcal{P} \cup \mathcal{D} \cup \{0\}$, and its cardinality is $|\mathcal{N}|=2n+1$.
The vehicle returns to the same depot node after fulfilling all requests. The travel cost between two nodes $i,j\in\mathcal{N}$ is denoted by $c_{ij}$. 
The goal is to minimize the total cost for the vehicle. 
The integer programming (IP) for PDTSP \cite{dumi2009tsp} is provided in Appendix \ref{append:ip}. 

From now on, we will use shorthand notations that refers a pickup node as $\mathcal{P}$-node and a delivery node as $\mathcal{D}$-node, respectively.


\begin{defn}\label{defn:visitseq} \textbf{Visiting sequence:} A visiting sequence is a sequence of nodes visited on a graph.  
\end{defn}
\begin{defn}\label{defn:hamiltonian} \textbf{Hamiltonian cycle:} 
A Hamiltonian cycle is a closed graph cycle through a graph that visits each node exactly once \cite{skiena1991hc}, denoted as $\ 0\rightarrow i \rightarrow \cdots \rightarrow n+j \rightarrow 0$ ($1\leqslant i,j \leqslant n$) where each $\mathcal{P}$- and $\mathcal{D}$-node is visited exactly once. 
\end{defn}
\begin{defn}
\label{defn:tour} \textbf{Tour in PDTSP:} A tour $\mathcal{T}$ is a feasible Hamiltonian cycle. 
A Hamiltonian cycle is feasible if the following \textbf{precedence conditions} (PC) hold $\forall i \in \{1,\cdots,n \}$:
\begin{align}
    (\text{\small{Precedence constraints for nodes})  } p_i<d_{n+i}, 
\end{align}
where $p_i, d_{n+i} (1 \leqslant p_i,d_{n+i} \leqslant 2n)$ are the indices of $\mathcal{P}$-node $i$ and $\mathcal{D}$-node $n+i$ in a tour, respectively. 
\end{defn}

\subsection{Why is feasible solution mapping important?}
We introduce a property regarding tours in PDTSP to highlight the importance of feasible solution mapping. 
\begin{prop}\label{prop:num_tour}
The total number of Hamiltonian cycle in PDTSP is $(2n)!$. The total number of feasible Hamiltonian cycles (i.e., tours) is $\frac{(2n)!}{2^n}$.
\end{prop}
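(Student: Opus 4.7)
The plan is to prove the two claims separately, both by elementary counting. For the first, I will fix the depot as the distinguished start/end of the cycle and observe that a Hamiltonian cycle is then determined by a linear ordering of the remaining $2n$ pickup-and-delivery nodes; there are exactly $(2n)!$ such orderings, giving the total count of Hamiltonian cycles.

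For the second claim, I will use a symmetry/orbit argument. For each index $i \in \{1,\dots,n\}$, define an involution $\sigma_i$ on the set of all Hamiltonian cycles that swaps the positions of $\mathcal{P}$-node $i$ and $\mathcal{D}$-node $n+i$, leaving all other nodes fixed. These involutions commute and generate an abelian group $G \cong (\mathbb{Z}/2\mathbb{Z})^n$ of order $2^n$ acting freely on the set of Hamiltonian cycles (freely because swapping distinct positions of distinguishable nodes always yields a different cycle). Hence every orbit has size exactly $2^n$, and the $(2n)!$ Hamiltonian cycles decompose into $(2n)!/2^n$ orbits.

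The final step is to show that each orbit contains exactly one feasible Hamiltonian cycle (i.e., one tour). Given any orbit representative, for each pair $(i, n+i)$ we may independently decide whether $\sigma_i$ acts or not; the $2^n$ resulting cycles realize all $2^n$ combinations of relative orders of the $n$ pickup-delivery pairs. Exactly one combination satisfies $p_i < d_{n+i}$ simultaneously for all $i$, namely the one obtained by applying $\sigma_i$ whenever the representative violates the $i$-th precedence constraint. This shows a bijection between orbits and tours, so the number of tours equals the number of orbits, $(2n)!/2^n$.

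The argument has essentially no hard step; the only point requiring slight care is the claim that $G$ acts freely, which must be justified by noting that all $2n+1$ nodes in a Hamiltonian cycle occupy distinct positions, so swapping the positions of any nonempty subset of pairs produces a cycle differing from the original in at least two positions. Once that is observed, orbit-stabilizer gives the partition into blocks of size $2^n$, and the one-feasible-per-orbit argument finishes the proof.
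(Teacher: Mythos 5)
Your proof is correct, but it takes a genuinely different route from the paper's. The paper counts tours constructively: it first permutes the $n$ pickup nodes ($n!$ ways) and then inserts the delivery nodes one at a time from the last pickup backwards, showing by induction that each permutation admits $1\cdot 3\cdots(2n-1)=\frac{(2n)!}{2^n\, n!}$ feasible insertions, whence $n!\cdot\frac{(2n)!}{2^n n!}=\frac{(2n)!}{2^n}$ tours. You instead let the group $G\cong(\mathbb{Z}/2\mathbb{Z})^n$ generated by the pairwise swaps $\sigma_i$ act on all $(2n)!$ Hamiltonian cycles, argue the action is free, and observe that within each orbit the two positions occupied by the pair $\{i,n+i\}$ are fixed while only their order varies independently over $i$ --- so each orbit of size $2^n$ contains exactly one feasible cycle. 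Both arguments are sound. Your orbit argument is arguably cleaner: it avoids the induction, makes the factor $2^n$ appear as an exact $2^n$-to-one correspondence between Hamiltonian cycles and tours, and is essentially the combinatorial form of the observation that the $n$ precedence events are independent and each holds with probability $\tfrac12$ under a uniformly random permutation. The paper's insertion count, on the other hand, is constructive and dovetails with its later tour-construction procedure (Proposition 4.1). The one step in your write-up that deserved care --- freeness of the action --- you do justify correctly, since a nonidentity element moves at least one pair of distinct nodes between two distinct positions.
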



The proof is in Appendix \ref{appendix:proof}. Proposition \ref{prop:num_tour} indicates that the feasible solution space is considerably smaller than the whole solution space, as the problem size increases. This motivates us to develop more efficient tools to restrict solution search in a feasible space without exploring the infeasible one. 


\subsection{Blocks in a tour}
Here we define pickup and delivery blocks, which satisfy precedence constraints and facilitate operator design.   

\begin{defn}\label{defn:block} \textbf{Pickup and delivery blocks ($\mathcal{P}$- and $\mathcal{D}$-blocks):} We define a pickup (delivery) block as a sub-sequence in a tour, containing only pickup (delivery) nodes. We use $\mathcal{T}^{(\mathcal{P})}$ and $\mathcal{T}^{(\mathcal{D})}$ to denote $\mathcal{P}$- and $\mathcal{D}$-blocks, respectively. Mathematically,
\begin{align}
    & \mathcal{P}\text{-block } \mathcal{T}^{(\mathcal{P})}: i_1 \rightarrow \cdots \rightarrow i_l \nonumber  \\
    & \mathcal{D}\text{-block } \mathcal{T}^{(\mathcal{D})}: n+j_1 \rightarrow\cdots \rightarrow n+j_m \nonumber 
\end{align} 
where, $l \geqslant 1$ is the size of $\mathcal{T}^{(\mathcal{P})}$ and $m \geqslant 1$ is the size of $\mathcal{T}^{(\mathcal{D})}$. We use $\mathcal{N}(\mathcal{T}^{(\mathcal{P})})$ and $\mathcal{N}(\mathcal{T}^{(\mathcal{D})})$ to denote the sets of nodes belong to $\mathcal{P}$-block $\mathcal{T}^{(\mathcal{P})}$ and $\mathcal{D}$-block $\mathcal{T}^{(\mathcal{D})}$, respectively. $\mathcal{N}(\mathcal{T}^{(\mathcal{P})}) \subseteq \mathcal{P}$ and $\mathcal{N}(\mathcal{T}^{(\mathcal{D})}) \subseteq \mathcal{D}$.
\end{defn}

Below we present properties of $\mathcal{P}$- and $\mathcal{D}$-blocks for the representation of blocks in a tour.

\begin{prop}\label{prop:tour_block}
    A tour can always be represented by a sequence of $\mathcal{P}$- and $\mathcal{D}$-blocks.
\end{prop}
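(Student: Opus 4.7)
The plan is to decompose the tour by a direct greedy scan, showing that maximal runs of same-type nodes produce exactly the claimed block representation. I would fix a tour $\mathcal{T}: 0 \to v_1 \to v_2 \to \cdots \to v_{2n} \to 0$ and partition the index set $\{1,\ldots,2n\}$ into consecutive maximal intervals on which every $v_i$ belongs to the same class (all lying in $\mathcal{P}$, or all lying in $\mathcal{D}$). Because $\mathcal{N}\setminus\{0\} = \mathcal{P} \cup \mathcal{D}$ is a disjoint union, every non-depot node has a well-defined class, so such a partition exists and is uniquely determined.

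Next I would identify each maximal $\mathcal{P}$-interval with a $\mathcal{P}$-block and each maximal $\mathcal{D}$-interval with a $\mathcal{D}$-block as in Definition~\ref{defn:block}. By maximality, adjacent intervals necessarily alternate between the two classes, so concatenating them in order recovers $\mathcal{T}$ as an alternating sequence of the form $\mathcal{T}^{(\mathcal{P})}_1, \mathcal{T}^{(\mathcal{D})}_1, \mathcal{T}^{(\mathcal{P})}_2, \mathcal{T}^{(\mathcal{D})}_2, \ldots$ I would additionally invoke the precedence condition $p_i < d_{n+i}$ from Definition~\ref{defn:tour} to argue that $v_1$ cannot be a $\mathcal{D}$-node: otherwise its paired pickup would occupy a position strictly greater than $d_{n+i}=1$, violating PC. Hence the representation begins with a $\mathcal{P}$-block and, by the symmetric argument on $v_{2n}$, ends with a $\mathcal{D}$-block.

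I do not expect a substantive obstacle. The argument is essentially combinatorial bookkeeping, because Definition~\ref{defn:block} admits any same-type consecutive sub-sequence as a block, so any tour groups into such blocks automatically. The only care required is to phrase the maximality condition so that the decomposition is well-defined and the alternation is forced rather than merely allowed. If a more formal presentation is desired, an induction on $n$ that peels off the leading $\mathcal{P}$-block and invokes the inductive hypothesis on the residual tour over the remaining pickup-delivery pairs would furnish an equally short inductive proof.
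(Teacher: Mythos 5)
Your proof is correct and follows essentially the same route as the paper, which simply asserts the decomposition $0 \rightarrow \mathcal{T}^{(\mathcal{P})}_{1} \rightarrow \cdots \rightarrow \mathcal{T}^{(\mathcal{D})}_{K} \rightarrow 0$ by appeal to Definitions~\ref{defn:tour} and~\ref{defn:block}; your version makes explicit the maximal-run partition and the precedence argument forcing the first block to be a $\mathcal{P}$-block and the last a $\mathcal{D}$-block, details the paper leaves implicit.
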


\begin{prop}\label{prop:block_sub}
    A block with size$\geqslant 2$ can be decomposed as two adjacent blocks. On the contrary, any two adjacent $\mathcal{P}$- ($\mathcal{D}$-) blocks can be combined as one $\mathcal{P}$- ($\mathcal{D}$-) block.
\end{prop}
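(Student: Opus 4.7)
The plan is to prove both directions of Proposition~\ref{prop:block_sub} by direct unpacking of Definition~\ref{defn:block}, treating the $\mathcal{P}$-block case explicitly and noting that the $\mathcal{D}$-block case is completely symmetric. Since Definition~\ref{defn:block} characterizes a $\mathcal{P}$-block as any sub-sequence of the tour whose nodes all belong to $\mathcal{P}$ (with size at least one), the proposition is essentially an observation that this property is preserved both by splitting and by concatenation along the tour.

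For the forward (decomposition) direction I would start from an arbitrary $\mathcal{P}$-block $\mathcal{T}^{(\mathcal{P})}: i_1 \rightarrow \cdots \rightarrow i_l$ with $l \geqslant 2$. I would then choose any split index $k$ with $1 \leqslant k \leqslant l-1$ and consider the two sub-sequences $i_1 \rightarrow \cdots \rightarrow i_k$ and $i_{k+1} \rightarrow \cdots \rightarrow i_l$. Each sub-sequence has size $\geqslant 1$, each inherits from $\mathcal{T}^{(\mathcal{P})}$ the property that its nodes all lie in $\mathcal{P}$, and so each satisfies Definition~\ref{defn:block}. Adjacency follows because $i_k$ is immediately succeeded by $i_{k+1}$ in the tour, so the two resulting blocks are consecutive in the tour's visiting order.

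For the reverse (combination) direction I would take two adjacent $\mathcal{P}$-blocks $\mathcal{T}_1^{(\mathcal{P})}: i_1 \rightarrow \cdots \rightarrow i_l$ and $\mathcal{T}_2^{(\mathcal{P})}: j_1 \rightarrow \cdots \rightarrow j_m$, where adjacency means $i_l$ is immediately followed by $j_1$ in the tour. The concatenation $i_1 \rightarrow \cdots \rightarrow i_l \rightarrow j_1 \rightarrow \cdots \rightarrow j_m$ is then itself a contiguous sub-sequence of the tour, and its node set $\mathcal{N}(\mathcal{T}_1^{(\mathcal{P})}) \cup \mathcal{N}(\mathcal{T}_2^{(\mathcal{P})}) \subseteq \mathcal{P}$, so the combined sequence meets Definition~\ref{defn:block} and is a single $\mathcal{P}$-block. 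The argument for $\mathcal{D}$-blocks is identical with $\mathcal{P}$ replaced by $\mathcal{D}$ throughout.

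Honestly, there is no serious obstacle here: the statement is a definitional consequence. The only point one must be careful about is that Definition~\ref{defn:block} implicitly requires a block to be a \emph{contiguous} sub-sequence of the tour (otherwise arbitrary subsets of $\mathcal{P}$-nodes would qualify), and both operations above clearly preserve contiguity. I would therefore make that contiguity interpretation explicit at the start of the proof, so that the split pieces remain contiguous and the concatenation across the adjacency edge remains contiguous, which is what makes both directions immediate.
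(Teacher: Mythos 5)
Your proof is correct and follows essentially the same route as the paper's: split the block at an interior position to get two adjacent sub-blocks, and concatenate two adjacent same-type blocks to recover one, all by direct appeal to Definition~\ref{defn:block}. Your explicit remark that blocks must be \emph{contiguous} sub-sequences of the tour is a useful clarification the paper leaves implicit, but it does not change the argument.
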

Proposition \ref{prop:block_sub} indicates that given a tour in PDTSP, the block representation is not unique. 




\begin{prop}\label{prop:tour_precedence} \textbf{(Precedence constraints for blocks.)}
Denote $k_1$ and $k_2$ as indices of $\mathcal{P}$-block $\mathcal{T}^{(\mathcal{P})}_{k_1}$ and $\mathcal{D}$-block $\mathcal{T}^{(\mathcal{D})}_{k_2}$, 
and $\mathcal{N}(\mathcal{T}^{(\mathcal{P})}_{k_1}),\mathcal{N}(\mathcal{T}^{(\mathcal{D})}_{k_2})$ as the sets of nodes belonging to $\mathcal{P}$-block $\mathcal{T}_{k_1}^{(\mathcal{P})}$ and $\mathcal{D}$-block $\mathcal{T}_{k_2}^{(\mathcal{D})}$, respectively.
For a PD node pair $(i,n+i)$ in a tour represented by a block sequence, if $i \in \mathcal{N}(\mathcal{T}^{(\mathcal{P})}_{k_1})$ and $n+i \in \mathcal{N}(\mathcal{T}^{(\mathcal{D})}_{k_2})$, then $k_1<k_2$. 
\end{prop}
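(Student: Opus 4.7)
The plan is to derive the block-level precedence directly from the node-level precedence constraint of Definition \ref{defn:tour}, namely $p_i < d_{n+i}$ for every pickup-delivery pair $(i, n+i)$ in a feasible tour. The key structural fact is that, by Definition \ref{defn:block}, each block is a contiguous sub-sequence of the tour whose nodes are all of one type; combined with Proposition \ref{prop:tour_block}, this writes the tour as a concatenation of blocks whose order in the block sequence faithfully records the relative order of the enclosed nodes.

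First I would note that $\mathcal{T}^{(\mathcal{P})}_{k_1}$ is a $\mathcal{P}$-block while $\mathcal{T}^{(\mathcal{D})}_{k_2}$ is a $\mathcal{D}$-block, and since $\mathcal{N}(\mathcal{T}^{(\mathcal{P})}_{k_1}) \subseteq \mathcal{P}$ and $\mathcal{N}(\mathcal{T}^{(\mathcal{D})}_{k_2}) \subseteq \mathcal{D}$ are disjoint, these two blocks are distinct occurrences in the decomposition. Hence $k_1 \neq k_2$.

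Next I would formalize contiguity: in the block sequence realizing the tour, if a block appears at position $k$ and another at position $k'$ with $k < k'$, then every tour-position index of a node inside the earlier block is strictly smaller than every tour-position index of a node inside the later block. Applying this to our two blocks, the assumption $k_1 > k_2$ would yield $p_i > d_{n+i}$, directly contradicting the feasibility constraint $p_i < d_{n+i}$ guaranteed by $\mathcal{T}$ being a tour. Combined with $k_1 \neq k_2$, we conclude $k_1 < k_2$.

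The main subtlety, more bookkeeping than mathematical obstacle, is in pinning down the correspondence between the block index in the decomposition of Proposition \ref{prop:tour_block} and the actual position index of a node in the tour. Once this correspondence is stated cleanly, e.g.\ by noting that concatenation of contiguous sub-sequences preserves ordering, the proof is a one-line contrapositive of Definition \ref{defn:tour}'s precedence constraint.
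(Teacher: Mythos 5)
Your proposal is correct and follows essentially the same route as the paper: assume $k_2 < k_1$ (equivalently $k_1 > k_2$), use the contiguity of blocks in the block sequence to conclude $d_{n+i} < p_i$, and contradict the node-level precedence constraint $p_i < d_{n+i}$ of Definition \ref{defn:tour}. Your explicit remark that $k_1 \neq k_2$ because a $\mathcal{P}$-block and a $\mathcal{D}$-block have disjoint node sets is a small point the paper leaves implicit, but otherwise the arguments coincide.
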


The proof of Proposition 3.2-3.4 is in Appendix \ref{appendix:proof}. This proposition says that if two nodes of a PD node pair fulfills precedence conditions, the $\mathcal{P}$- and $\mathcal{D}$- blocks that the PD node pair belongs to also fulfill precedence conditions for blocks. This finding would facilitate the design of our operators to be detailed in Sec.~\ref{sec:opt_design}.

Figure \ref{fig:5_node_tour} illustrates a tour $\mathcal{T}$ on a toy example when $n=5$. The blue dot $0$ is the depot. The green and red dots represent $\mathcal{P}$- and $\mathcal{D}$-nodes, respectively. The tour is $0 \rightarrow 1 \rightarrow 2 \rightarrow 3 \rightarrow 7 \rightarrow 8 \rightarrow 4 \rightarrow 5 \rightarrow 6 \rightarrow 9 \rightarrow 10 \rightarrow 0$ (marked by black arrow). Green and red circles mark $\mathcal{P}$- and $\mathcal{D}$-blocks, respectively. A block representation of the tour is $ 0 \rightarrow \mathcal{T}^{(\mathcal{P})}_1 \rightarrow \mathcal{T}^{(\mathcal{D})}_2 \rightarrow \mathcal{T}^{(\mathcal{P})}_3 \rightarrow \mathcal{T}^{(\mathcal{D})}_4 \rightarrow \mathcal{T}^{(\mathcal{D})}_5 \rightarrow 0$. We will use this toy example throughout Section \ref{sec:method} to demonstrate how to perform operators in our methodology.
\begin{figure}[h]
   \centering
   \includegraphics[scale=.35]{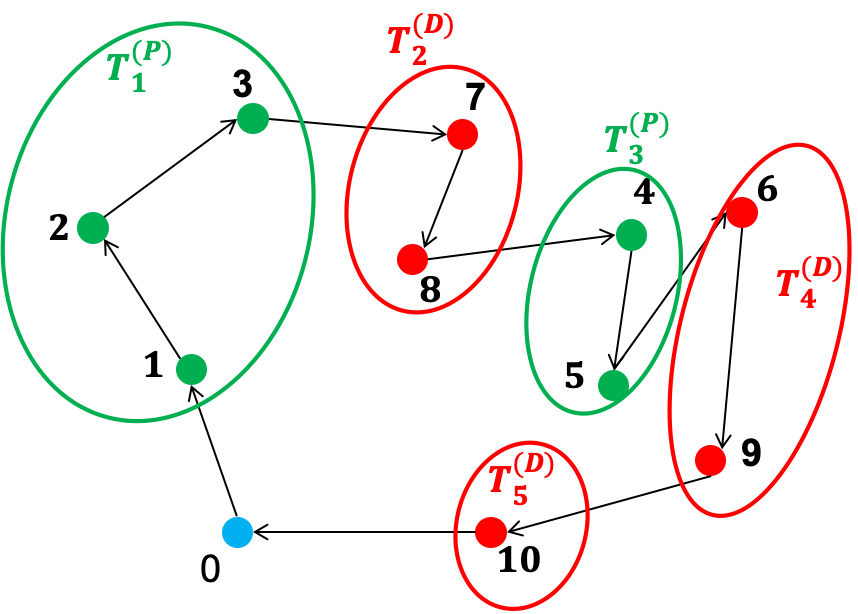}
   \vspace{-0.15in}
    \caption{Tour in a toy example ($n=5$)}
    \label{fig:5_node_tour}
\end{figure}


\section{Methodology}
\label{sec:method}
\subsection{RL framework}
\label{sec:rl_framework}

\begin{figure*}[t]
   \centering
   \includegraphics[scale=.45]{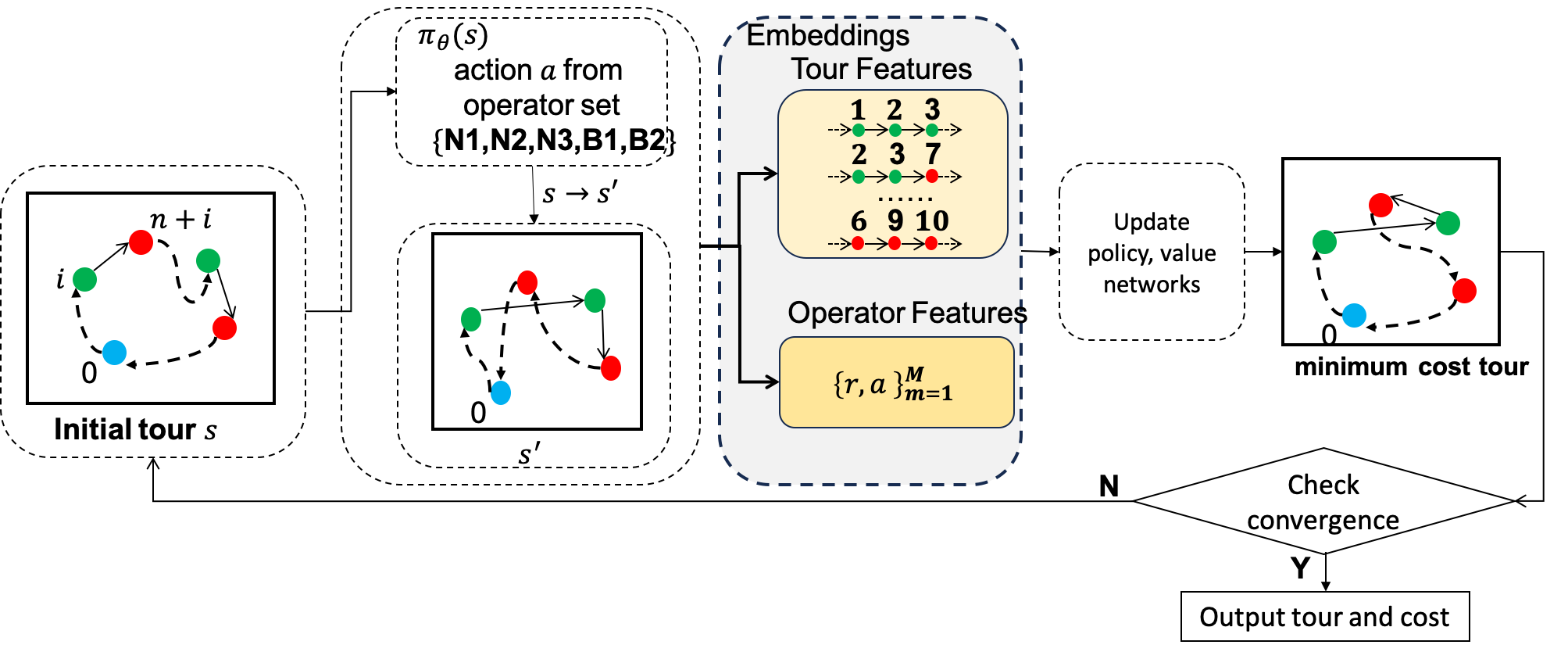}
   \vspace{-0.2in}
    \caption{RL framework}
    \label{fig:rl_frame}
    \vspace{-0.15in}
\end{figure*}

Let us establish an equivalence between RL and solving PDTSP. To solve PDTSP requires us (i.e., an agent) to identify a sequence of operations (i.e., actions) from an initial state (i.e., initial tour), in order to minimize total length of a tour (Figure \ref{fig:rl_frame}). We specify each element in the RL framework as:

\begin{itemize}
    \item \textbf{State $s\in \mathcal{S}$:} The current tour $\mathcal{T}$, including node and block information in the tour. The state space $\mathcal{S}$ encloses all $\frac{(2n)!}{2^n}$ tours (i.e., feasible Hamiltonian cycles).
    \item \textbf{Action $a\in{\mathcal{A}}$:} Operators that are designed to map one tour to another, including two types of operation, namely, node-exchange operators and block-exchange operators. Among them, there are a total of five discrete operators, namely, intra-block node-exchange operator ($N_1$), inter-block node-exchange operator ($N_2$), node pair-exchange operator ($N_3$), same type block-exchange operator ($B_1$), and mixed type block-exchange operator ($B_2$). Each operator is detailed in Section \ref{sec:opt_design}.
    \item \textbf{Transition $s\rightarrow s'$:} After an operator is executed, the current tour is updated to a new tour under the operation.
    \item \textbf{Reward $r$:} The reward of an operator is the cost difference between the current tour and the new one. Each action is associated with a reward, namely, $r_{N_1}, r_{N_2}, r_{N_3}, r_{B_1}, r_{B_2}$, respectively.
\end{itemize}

\noindent The key to implementing the RL method is to construct a feasible solution, and then define admissible actions that map one tour (i.e., feasible solution) to another. 

\subsection{Initial tour construction} 
\label{sec:generate_initial_tour}

We demonstrate how to construct initial tours, which can be used as initial states in the RL framework.
The following proposition shows that we can always construct a set of visiting sequences that are feasible.

\begin{prop}\label{prop:exist_tour}
If a visiting sequence satisfies: $p_i=d_{n+i}-1, \forall i \in \mathcal{P}$  where $p_i, d_{n+i}$ are the indices of $\mathcal{P}$-node $i$ and $\mathcal{D}$-node $n+i$, respectively, 
this sequence is a tour in PDTSP. 
In other words, the constructed tour where $p_i=d_{n+i}-1$ is when the vehicle responds to one and only one request at a time until all passengers or goods are delivered. 
\end{prop}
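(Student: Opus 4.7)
The proposition claims that if every pickup node $i$ sits at the position immediately preceding its matching delivery node $n+i$, then the resulting visiting sequence is a tour. By Definition \ref{defn:tour}, a tour is a feasible Hamiltonian cycle, so the proof plan is to verify (i) the Hamiltonian cycle property (Definition \ref{defn:hamiltonian}), and (ii) the precedence condition ($p_i < d_{n+i}$ for all $i$).

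First, I would address the Hamiltonian cycle requirement. The hypothesis $p_i = d_{n+i}-1$ is asserted for every $i \in \mathcal{P}$, so there are $n$ pickup indices $p_1,\dots,p_n$ and $n$ delivery indices $d_{n+1},\dots,d_{2n}$, each delivery index equal to the corresponding pickup index plus one. I would argue that the $n$ pairs $(p_i, p_i+1)$ are pairwise disjoint: if two pickups shared a position, they would coincide (contradicting distinctness of the $n$ pickup labels); if a pickup position equalled some delivery position $p_j+1$, then that delivery node and a distinct pickup node would occupy the same slot, again impossible in a sequence of nodes. These $2n$ positions together with the depot slot account for exactly $|\mathcal{N}| = 2n+1$ positions, and each of the $2n+1$ nodes appears exactly once. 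Closing the sequence back to the depot yields a Hamiltonian cycle.

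Second, I would verify the precedence condition from Definition \ref{defn:tour}: the hypothesis $p_i = d_{n+i}-1$ immediately gives $p_i < d_{n+i}$ for every $i \in \{1,\dots,n\}$, which is exactly the precedence constraint. Combining the Hamiltonian cycle property with the precedence constraints, Definition \ref{defn:tour} then certifies that the visiting sequence is a tour in PDTSP.

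The proof is essentially a direct unpacking of the definitions, so I do not anticipate a genuine obstacle. The only subtlety worth being careful about is the disjointness argument that guarantees all $2n+1$ nodes actually occupy distinct positions; without this bookkeeping step, one has only verified precedence and not the Hamiltonian property. Everything else follows by matching the hypothesis term-by-term against Definitions \ref{defn:hamiltonian} and \ref{defn:tour}.
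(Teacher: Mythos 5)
Your proof is correct and takes essentially the same route as the paper, which simply writes the sequence as $0\rightarrow i \rightarrow n+i \rightarrow \cdots \rightarrow j \rightarrow n+j \rightarrow 0$ and appeals directly to Definition~\ref{defn:tour}. Your version is in fact more careful than the paper's one-line argument, since you explicitly verify the Hamiltonian property via the disjointness of the position pairs $(p_i, p_i+1)$ before checking precedence.
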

\begin{proof}
   Consider a sequence $0\rightarrow i \rightarrow n+i \cdots\rightarrow j \rightarrow n+j \rightarrow 0, \forall i \in \mathcal{P}$ where $p_i=d_{n+i}-1$. Therefore, the sequence is a tour in PDTSP according to Definition \ref{defn:tour}. 
\end{proof}

Proposition \ref{prop:exist_tour} facilitates the generation of initial tours in our learning framework by randomizing the order of $n$ requests. 
Here we briefly describe how to construct a tour based on Proposition \ref{prop:exist_tour}. 
Given an instance with $n$ PD node pairs, we can randomly generate a permutation of the $\mathcal{P}$-node set, denoted as $(i_1,\cdots,i_n)$. We then construct a tour as $0\rightarrow i_1 \rightarrow n+i_1 \rightarrow \cdots \rightarrow i_n\rightarrow n+i_n \rightarrow 0$, by appending the $\mathcal{D}$-nodes after their corresponding $\mathcal{P}$-nodes. 

\subsection{Learning operator design} 
\label{sec:opt_design}
This section shows how to design admissible operators for PDTSP. We first briefly introduce a naive operator which does not take into account the solution feasibility.

\noindent \textbf{Naive learning operator}
A naive operator randomly swaps two nodes in a tour. In Figure \ref{fig:naive_operator}, we demonstrate the operator using a sequence of three nodes. The operator swaps  node $j$ and $k$. The visiting sequence in the solution is changed from $i \rightarrow j \rightarrow k$ to $i \rightarrow k \rightarrow j$. Note that the naive operator may violate precedence constraints. 

\begin{figure}[h]
    \centering
    \vspace{-0.1in}
    \includegraphics[scale=.3]{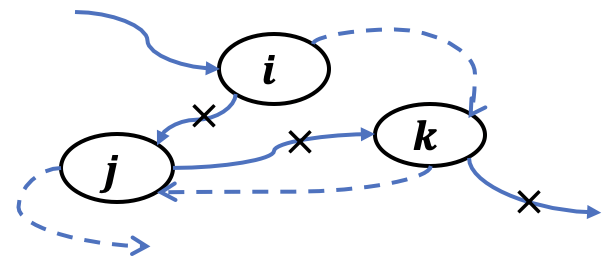}
    \vspace{-0.2in}
    \caption{Naive operator}
    \label{fig:naive_operator}
    \vspace{-0.2in}
\end{figure}


Below we propose five operators (belonging to two categories) that map one feasible solution to another. 
The conditions under which each operator can be adopted is also discussed. 

\begin{enumerate}
\item Node-exchange operator (N$_X$O)
\begin{enumerate}
\item[($N_1$)] Intra-block node-exchange operator (IntraBlock N$_X$O): 
It swaps the visiting sequence of two nodes within a $\mathcal{P}$- or $\mathcal{D}$-block (Figure \ref{subfig:o_1}). 

\textbf{Conditions:} 
Since there is no precedence constraint for nodes within a block, the visiting sequence of pickup (delivery) nodes in a $\mathcal{P}$- or $\mathcal{D}$-block can be randomly swapped.

\item[($N_2$)] Inter-block node-exchange operator (InterBlock N$_X$O):
It swaps the visiting sequence of two nodes across a $\mathcal{P}$-block and a $\mathcal{D}$-block (Figure \ref{subfig:o_2}). 

\textbf{Conditions:} The visiting sequence of nodes in a $\mathcal{D}$-block and nodes in $\mathcal{P}$-blocks following the $\mathcal{D}$-block can be randomly swapped. Because there is no precedence constraint for a $\mathcal{P}$- ($\mathcal{D}$-) node and any ones before (after) it.

\item[($N_3$)] Node pair-exchange operator (N$^2_X$O): It swaps $\mathcal{P}$-nodes from two PD node pairs, and accordingly also $\mathcal{D}$-nodes in the node pairs (Figure \ref{subfig:o_3}). 

\textbf{Conditions:} The $\mathcal{P}$- and $\mathcal{D}$-nodes of a node pair can be swapped with $\mathcal{P}$- and $\mathcal{D}$-nodes of another node pair. 
\end{enumerate}

\item Block-exchange operator (B$_X$O) 
Analogous to node-exchange operator, block-change operator swaps the visiting sequence of two blocks, of the same or different types. 
\begin{enumerate}
\item[($B_1$)] Same type block-exchange operator (SameB$_X$O):
It swaps the visiting sequence of two same-type blocks within a $\mathcal{P}$ ($\mathcal{D}$)-block sequence in a tour (Figure \ref{subfig:o_4}).

\textbf{Conditions:} Since adjacent same-type blocks can be combined as one block (Proposition \ref{prop:block_sub}) and there is no precedence constraints for nodes within a block, the visiting sequence of $\mathcal{P}$ ($\mathcal{D}$)-blocks in a block sequence can be randomly swapped.
\item[($B_2$)] Mixed type block-exchange operator (MixB$_X$O): MixB$_X$O swaps the visiting sequence of a $\mathcal{D}$-block and a $\mathcal{P}$-block in a tour (Figure \ref{subfig:o_5}). 

\textbf{Conditions:} The sequence of a $\mathcal{D}$-block and any $\mathcal{P}$-blocks following this $\mathcal{D}$-block can be randomly swapped. Because there is no precedence constraint for a $\mathcal{P}$- ($\mathcal{D}$-) block
and any blocks before (after) it.
\end{enumerate}
\end{enumerate}

\begin{figure}[h]
    \centering
    \subfloat[IntraBlock N$_X$O]{\includegraphics[scale=.45]{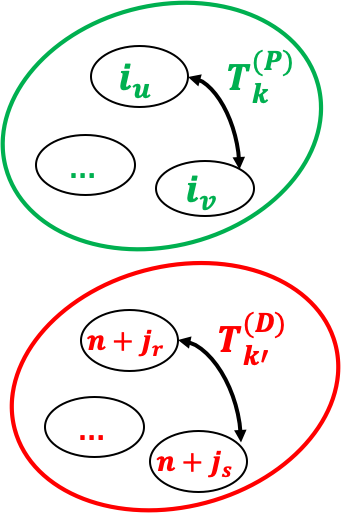}\label{subfig:o_1}}
    \hspace{0.7cm}
    \subfloat[InterBlock N$_X$O]{\includegraphics[scale=.45]{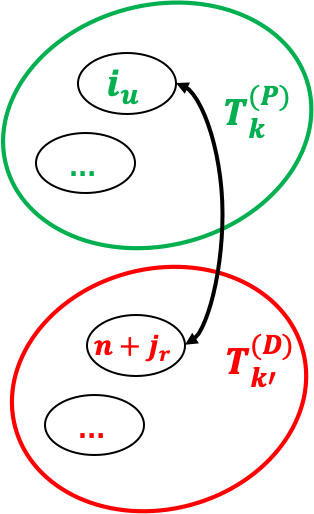}\label{subfig:o_2}}
    
    \subfloat[N$^2_X$O]{\includegraphics[scale=.48]{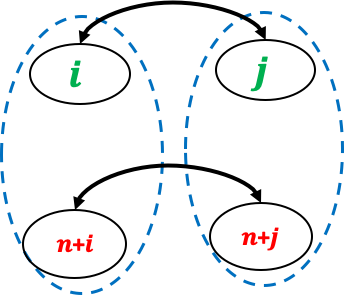}\label{subfig:o_3}}
    \hspace{0.2cm}
    \subfloat[SameB$_X$O]{\includegraphics[scale=.48]{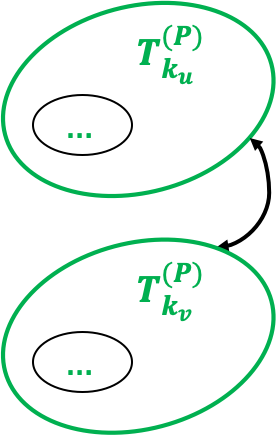}\label{subfig:o_4}}
    \hspace{0.2cm}
    \subfloat[MixB$_X$O]{\includegraphics[scale=.48]{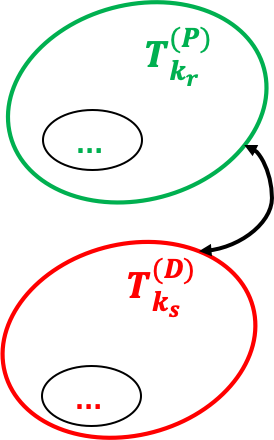}\label{subfig:o_5}}
    \vspace{-0.12in}
    \caption{Five admissible operators}
    \label{fig:5_opt}
    \vspace{-0.02in}
\end{figure}

Below we will elaborate what each operator does, its associated reward in the context of RL, and why it upholds solution feasibility. 
The toy example in Figure \ref{fig:5_node_tour} is used to demonstrate how to perform each operator.

\noindent \textbf{Node-exchange operator (N$_X$O)}\\
\noindent \textbf{(N1) IntraBlock N$_X$O:} Given a tour $\mathcal{T}$ represented by a block sequence $0 \rightarrow \mathcal{T}^{(\mathcal{P})}_{1} \rightarrow \cdots \rightarrow \mathcal{T}^{(\mathcal{P})}_{k} \rightarrow \cdots \rightarrow \mathcal{T}^{(\mathcal{D})}_{k'} \rightarrow \cdots \rightarrow \mathcal{T}^{(\mathcal{D})}_{K} \rightarrow 0$, IntraBlock N$_X$O swaps any two $\mathcal{P}$-nodes $i_u, i_v$ in the $\mathcal{P}$-block $\mathcal{T}^{(\mathcal{P})}_{k}$, i.e., $\forall i_u, i_v \in \mathcal{N}(\mathcal{T}^{(\mathcal{P})}_{k})$. Accordingly, the indices of these two nodes are updated to $p'_{i_u} \leftarrow p_{i_v},\ p'_{i_u} \leftarrow p_{i_v}$ where, $p'_{i_u}, p'_{i_v}$ denote the indices of node $i_u,i_v$ in the new tour $\mathcal{T}'$. 

Similarly, IntraBlock N$_X$O swaps any two $\mathcal{D}$-nodes $n+j_r,n+j_s$ in $\mathcal{D}$-block $\mathcal{T}^{(\mathcal{D})}_{k'}$. Accordingly, their node indices are updated to $p'_{n+j_r} \leftarrow p_{n+j_s},\ p'_{n+j_r} \leftarrow p_{n+j_s}$, where $p'_{n+j_r}, p'_{n+j_s}$ denote the indices of node $n+j_r,n+j_s$ in the new tour $\mathcal{T}'$.

The reward of IntraBlock N$_X$O is the change in travel cost, denoted as $r_{N_1}$, after the execution of the operator. 
We demonstrate how to compute the reward of IntraBlock N$_X$O in the toy example shown in Figure \ref{fig:IntraBlock_operator}. IntraBlock N$_X$O swaps node 2 and 3 in  $\mathcal{P}$-block $\mathcal{T}^{(\mathcal{P})}_1$ and gets a new tour $T':0 \rightarrow 1 \rightarrow 3 \rightarrow 2 \rightarrow 7 \rightarrow 8 \rightarrow 4 \rightarrow 5 \rightarrow 6 \rightarrow 9 \rightarrow 10 \rightarrow 0$. Therefore, the difference between the travel cost of two tours is 
\begin{align}
r_{N_1} =c_{13}+c_{27}-(c_{12}+c_{37})
\end{align}
Note that IntraBlock N$_X$O is like the naive operator except for the condition that the visiting sequence of pickup (delivery) nodes in a $\mathcal{P}$- ($\mathcal{D}$-) block can be randomly swapped.

\begin{prop}\label{prop:inner}
    IntraBlock N$_X$O is a mapping from one feasible solution to another. 
\end{prop}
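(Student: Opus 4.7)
The plan is to verify the two defining properties of a tour from Definition 3.3: that the output $\mathcal{T}'$ of IntraBlock N$_X$O is still a Hamiltonian cycle, and that it still satisfies the precedence conditions. The underlying intuition is that the operator only rearranges nodes inside a single block, so the block-level structure of $\mathcal{T}$ is preserved, and Proposition 3.4 can then be invoked to carry the precedence argument.

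First I would argue that $\mathcal{T}'$ is a Hamiltonian cycle. By construction, IntraBlock N$_X$O is a transposition of two positions in the visiting sequence of $\mathcal{T}$: every other position is untouched. Since $\mathcal{T}$ already visits each node of $\mathcal{N}$ exactly once (Definition 3.2) and a transposition is a bijection on positions, $\mathcal{T}'$ also visits each node exactly once, hence is a Hamiltonian cycle.

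Second, I would verify the precedence conditions. The crucial observation is that swapping two nodes within a single $\mathcal{P}$- (resp.\ $\mathcal{D}$-) block leaves the block decomposition of the tour unchanged: the block is still a maximal contiguous run of pickup (resp.\ delivery) nodes occupying exactly the same index range. I would split into two cases. In Case 1, suppose the operator swaps nodes $i_u,i_v$ inside $\mathcal{P}$-block $\mathcal{T}^{(\mathcal{P})}_k$. Their new positions $p'_{i_u},p'_{i_v}$ still lie in the position range of block $k$, while their delivery partners $n+i_u,n+i_v$ are untouched and remain in $\mathcal{D}$-blocks whose block indices, by Proposition 3.4 applied to the feasible input $\mathcal{T}$, are strictly greater than $k$. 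Since every position in a later block exceeds every position in block $k$, we obtain $p'_{i_u}<d_{n+i_u}$ and $p'_{i_v}<d_{n+i_v}$. All pairs $(i,n+i)$ not involved in the swap retain their original indices and thus inherit precedence from $\mathcal{T}$. Case 2, in which two $\mathcal{D}$-nodes are swapped inside a $\mathcal{D}$-block, is symmetric: the corresponding $\mathcal{P}$-nodes lie in strictly earlier $\mathcal{P}$-blocks, so their positions remain below the positions of the swapped $\mathcal{D}$-nodes.

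I do not foresee a substantive obstacle: the only care needed is the bookkeeping that a within-block transposition truly preserves the block decomposition, so that the already-established block-level precedence (Proposition 3.4) can be reused to conclude node-level precedence. With that observation in hand, the verification in each case reduces to comparing position indices across disjoint blocks, which is immediate.
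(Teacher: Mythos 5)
Your proposal is correct and follows essentially the same route as the paper's proof: both invoke Proposition 3.4 to place the delivery partners of the swapped pickup nodes in strictly later $\mathcal{D}$-blocks, so that the new positions (still bounded by the last position of block $k$) remain below the unchanged delivery positions, with the $\mathcal{D}$-block case handled symmetrically. Your additional remarks that a transposition preserves the Hamiltonian-cycle property and that uninvolved pairs inherit precedence are sound bookkeeping the paper leaves implicit.
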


\begin{figure}[h]
    \centering
    \includegraphics[scale=0.25]{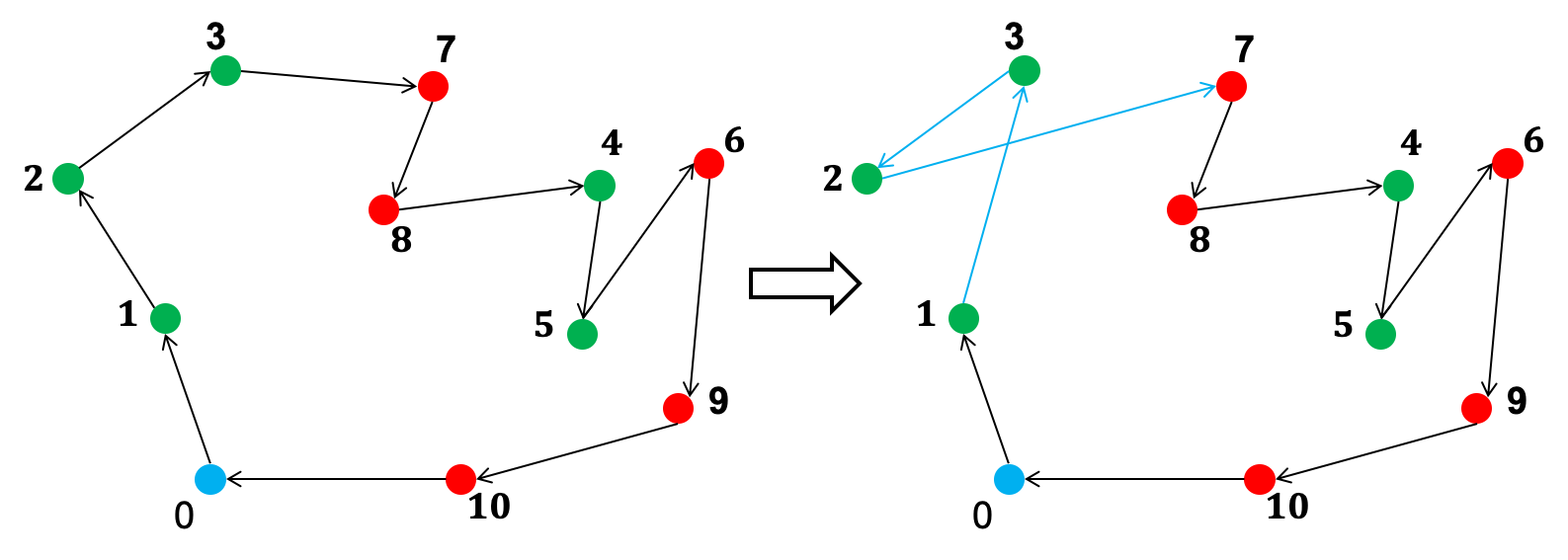}
    \vspace{-0.18in}
    \caption{IntraBlock N$_X$O on the toy example}
    \label{fig:IntraBlock_operator}
    \vspace{-0.1in}
\end{figure}

\noindent \textbf{(N2) InterBlock N$_X$O:} Given a tour $\mathcal{T}$ represented by a block sequence $0 \rightarrow \mathcal{T}^{(\mathcal{P})}_{1} \rightarrow \cdots \rightarrow \mathcal{T}^{(\mathcal{P})}_{k} \rightarrow \cdots \rightarrow
\mathcal{T}^{(\mathcal{D})}_{K} \rightarrow 0$, InterBlock N$_X$O swaps any $\mathcal{P}$-node $i_u$ in $\mathcal{P}$-block $\mathcal{T}^{(\mathcal{P})}_{k}$ and any $\mathcal{D}$-node $n+j_r$ in $\mathcal{D}$-block $\mathcal{T}^{(\mathcal{D})}_{k'} (k>k')$, i.e., $\forall i_u\in \mathcal{N}(\mathcal{T}^{(\mathcal{P})}_{k}), n+j_r \in \mathcal{N}(\mathcal{T}^{(\mathcal{D})}_{k'})$. Accordingly, the indices of these two nodes are updated to $p'_{i_u} \leftarrow d_{n+j_r},\ d'_{n+j_r} \leftarrow p_{i_u}$. $p'_{i_u}$ is the new index of node $i_u$ and $d'_{n+j_r}$ is the new index of node $n+j_r$. 



The reward of InterBlock N$_X$O is the change in travel cost, denoted as $r_{N_2}$, after the execution of the operator. Figure \ref{fig:outer_operator_ex} shows how to perform InterBlock N$_X$O on the toy example. InterBlock N$_X$O swaps node $7$ in the $\mathcal{D}$-block $\mathcal{T}^{(\mathcal{D})}_2$ and $4$ in the $\mathcal{P}$-block $\mathcal{T}^{(\mathcal{P})}_3$. The visiting sequence changes from $3 \rightarrow 7 \rightarrow 8 \rightarrow 4 \rightarrow 5$ to $3 \rightarrow 4 \rightarrow 8 \rightarrow 7 \rightarrow 5$ (marked in light blue). Therefore, the reward of InterBlock N$_X$O is calculated as
\begin{align}
r_{N_2}=l_{34}+l_{75}-(l_{37}+l_{45})
\end{align}
\begin{figure}[h]
    \centering
    \vspace{-0.18in}
    \includegraphics[scale=0.25]{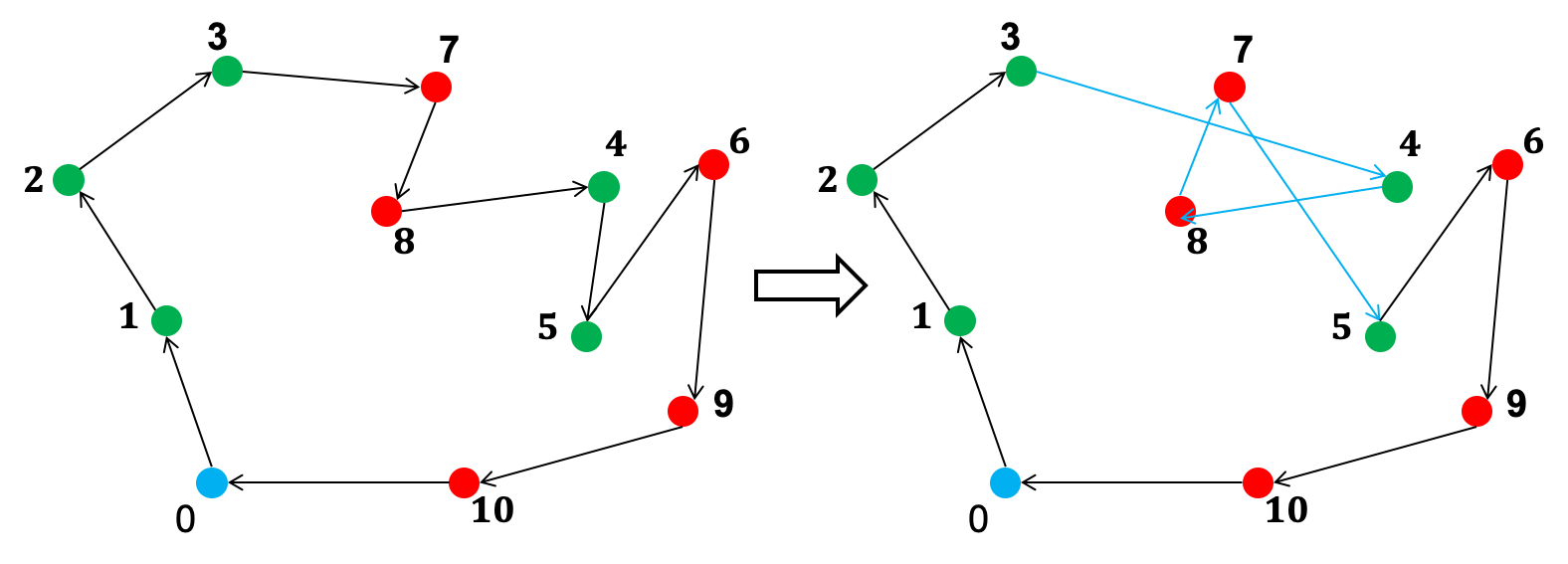}
    \vspace{-0.18in}
    \caption{InterBlock N$_X$O on the toy example}
    \vspace{-0.18in}
    \label{fig:outer_operator_ex}
\end{figure}
\begin{prop}\label{prop:outer}
    InterBlock N$_X$O is a mapping from one feasible solution to another. 
\end{prop}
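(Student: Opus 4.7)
The plan is to verify the two defining conditions of a tour from Definition~\ref{defn:tour}: the new visiting sequence is still a Hamiltonian cycle, and it still satisfies the precedence constraints for every PD pair. The first condition is immediate because the operator only permutes two distinct node labels within the sequence; every node in $\mathcal{N}$ still appears exactly once, and the depot positions are untouched. All of the work therefore goes into verifying the precedence constraints, which I will do by a short case split on which PD pairs could possibly be affected.

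The crucial structural observation to record first is that because the $\mathcal{D}$-block $\mathcal{T}^{(\mathcal{D})}_{k'}$ precedes the $\mathcal{P}$-block $\mathcal{T}^{(\mathcal{P})}_{k}$ in the original block representation (the hypothesis $k'<k$), Proposition~\ref{prop:tour_precedence} immediately gives the index inequality
\begin{equation}
d_{n+j_r} < p_{i_u}
\end{equation}
in the original tour $\mathcal{T}$. After the swap, the new indices are $p'_{i_u} = d_{n+j_r}$ and $d'_{n+j_r} = p_{i_u}$, while every other node's index is unchanged. This single inequality, combined with the precedence constraints already known to hold in $\mathcal{T}$, is what powers the rest of the argument.

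Next, I will partition the PD pairs into three groups and check each. (i) If a pair $(i, n+i)$ involves neither $i_u$ nor $n+j_r$, both of its indices are unchanged, so $p_i<d_{n+i}$ still holds. (ii) For the pair $(i_u, n+i_u)$, I need $p'_{i_u} < d_{n+i_u}$, i.e.\ $d_{n+j_r} < d_{n+i_u}$; this follows from chaining the structural inequality $d_{n+j_r} < p_{i_u}$ with the original precedence $p_{i_u} < d_{n+i_u}$. (iii) For the pair $(j_r, n+j_r)$, I need $p_{j_r} < d'_{n+j_r} = p_{i_u}$, which follows from the original $p_{j_r} < d_{n+j_r}$ combined with $d_{n+j_r} < p_{i_u}$. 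Along the way I should briefly note that $i_u$ and $n+j_r$ cannot come from the same pair, because otherwise $d_{n+j_r}<p_{i_u}$ would contradict the original precedence condition for that pair.

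I do not expect a genuine obstacle here; the proof is essentially a bookkeeping exercise on indices. The only subtle point is that one must resist the temptation to argue via the block precedence of Proposition~\ref{prop:tour_precedence} for the \emph{new} tour directly, since after the swap the blocks $\mathcal{T}^{(\mathcal{P})}_{k}$ and $\mathcal{T}^{(\mathcal{D})}_{k'}$ are no longer pure $\mathcal{P}$- and $\mathcal{D}$-blocks and the block decomposition has to be re-derived; working at the level of individual node indices as above sidesteps this entirely.
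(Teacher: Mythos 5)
Your proof is correct and follows essentially the same route as the paper's: both reduce everything to the key inequality $d_{n+j_r} < p_{i_u}$ (the swapped $\mathcal{D}$-node originally precedes the swapped $\mathcal{P}$-node) and then chain it with the original precedence constraints to verify the two affected pairs, with your version merely being more explicit about the unaffected pairs and the impossibility of the two nodes forming one pair. One small nit: that key inequality follows directly from the block ordering in the tour representation (every node of block $k'$ precedes every node of block $k$ when $k'<k$), not from Proposition~\ref{prop:tour_precedence}, which asserts the converse direction (pair membership forces the block ordering).
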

The proof is in Appendix \ref{appendix:proof}.

\noindent \textbf{(N3) N$^2_X$O:} Given a tour $\mathcal{T}$, for any two PD node pairs $(i,n+i)$ and $(j,n+j)$, N$^2_X$O swaps $\mathcal{P}$-nodes $i, j$ and their respective $\mathcal{D}$-nodes $n+i, n+j$. Accordingly, the indices of two $\mathcal{P}$-nodes are updated to $p'_{i} \leftarrow p_j,\ p'_{j} \leftarrow p_{i}$ and the indices of two $\mathcal{D}$-nodes are updated to $d'_{n+i} \leftarrow d_{n+j},\ d'_{n+j} \leftarrow d_{n+i}$, respectively.

The reward of InterBlock N$^2_X$O is the change in travel cost, denoted as $r_{N_3}$, after the execution of the operator. Figure \ref{fig:pd_operator} shows how to perform N$^2_X$O on the toy example. The reward of N$^2_X$O is 
\begin{align}
r_{N_3}&=l_{02}+l_{13}+l_{36}+l_{68}+l_{57}+l_{79} \nonumber \\
&-(l_{01}+l_{23}+l_{37}+l_{78}+l_{56}+l_{69})
\end{align}
\begin{figure}[h]
    \centering
    \vspace{-0.18in}
    \includegraphics[scale=0.25]{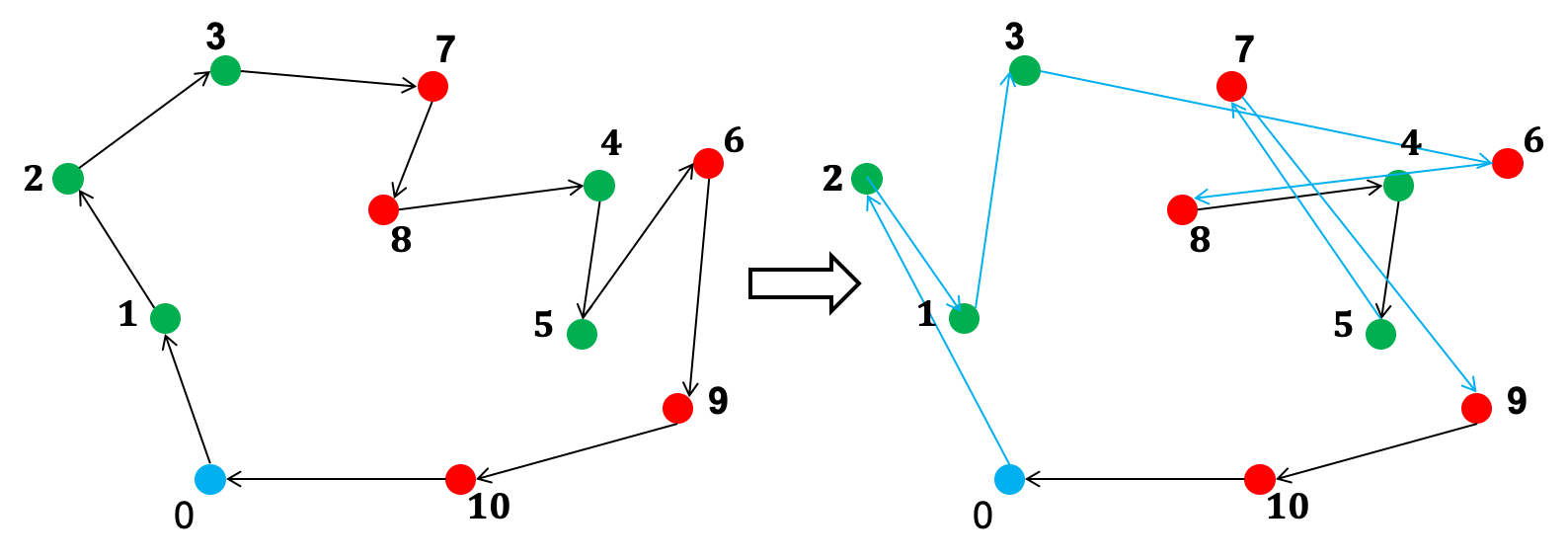}
    \vspace{-0.18in}
    \caption{N$^2_X$O on the toy example}
    \label{fig:pd_operator}
    \vspace{-0.18in}
\end{figure}
\begin{prop}
   N$^2_X$O is a mapping from one feasible solution to another. 
\end{prop}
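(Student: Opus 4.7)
The plan is to verify the two requirements in Definition \ref{defn:tour}: that the output $\mathcal{T}'$ of $N^2_X$O remains a Hamiltonian cycle, and that it continues to satisfy the precedence conditions for every PD node pair.

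First I would observe that $N^2_X$O only permutes the positions of four nodes: $i$ swaps position with $j$, and $n+i$ swaps position with $n+j$, while every other node keeps its position. Since $\mathcal{T}$ visits each node of $\mathcal{N}$ exactly once and $\mathcal{T}'$ is obtained by a permutation on the multiset of visited nodes that fixes the depot and every other node, $\mathcal{T}'$ also visits each node of $\mathcal{N}$ exactly once. Hence $\mathcal{T}'$ is a Hamiltonian cycle.

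Next I would verify precedence. Let $(k, n+k)$ be an arbitrary PD pair, with $p_k, d_{n+k}$ its indices in $\mathcal{T}$ and $p'_k, d'_{n+k}$ its indices in $\mathcal{T}'$. There are three cases to treat. If $k \notin \{i,j\}$, then the positions of both $k$ and $n+k$ are untouched, so $p'_k = p_k < d_{n+k} = d'_{n+k}$ by feasibility of $\mathcal{T}$. If $k = i$, then by definition of $N^2_X$O, $p'_i = p_j$ and $d'_{n+i} = d_{n+j}$, so the condition $p'_i < d'_{n+i}$ reduces to $p_j < d_{n+j}$, which holds because $\mathcal{T}$ is feasible. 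The case $k = j$ is symmetric and reduces to $p_i < d_{n+i}$, again guaranteed by feasibility of $\mathcal{T}$.

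Combining the two observations shows that $\mathcal{T}'$ is a feasible Hamiltonian cycle, i.e., a tour, and therefore $N^2_X$O maps one feasible solution to another. I do not anticipate a genuine obstacle here: because the argument relies only on comparing index values that are exchanged as a coupled pair $(p, d)$, no edge cases (such as $i$ and $j$ lying in the same block, or the blocks flanking $n+i$ and $n+j$ being adjacent) require separate treatment.
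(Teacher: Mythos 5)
Your proof is correct and follows essentially the same argument as the paper's: the key step in both is that the swap couples the indices so that $p'_i = p_j < d_{n+j} = d'_{n+i}$ and symmetrically for $j$, with feasibility of the original tour supplying the inequality. Your version just spells out the (routine) Hamiltonian-cycle preservation and the unaffected pairs, which the paper leaves implicit.
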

The proof is in Appendix \ref{appendix:proof}.

Note that operators developed in some literature actually perform the same function as our proposed node exchange operator (N$_X$O). Below we will demonstrate the equivalence between our N$^{2}_X$O and operators developed from two related work.

In \cite{carrabs2007operator}, a sub-sequence operators, consisting of only PD node pairs, permits the swap of any two PD node pairs within this sequence. The definition of this operator in \cite{carrabs2007operator} can be demonstrated via the tour example shown in Figure \ref{fig:pd_operator}.   
In this tour, there exists only one such sub-sequence $0  \rightarrow1 \rightarrow 2\rightarrow...\rightarrow 9 \rightarrow 10  \rightarrow 0$, encompassing five PD node pairs. Once this sub-sequence is identified, any two PD node pairs among these five PD node pairs can be swapped, which is essentially executing our N$^{2}_X$O. 
\begin{figure}[h]
    \centering
    \vspace{-0.1in}
    \includegraphics[scale=0.28]{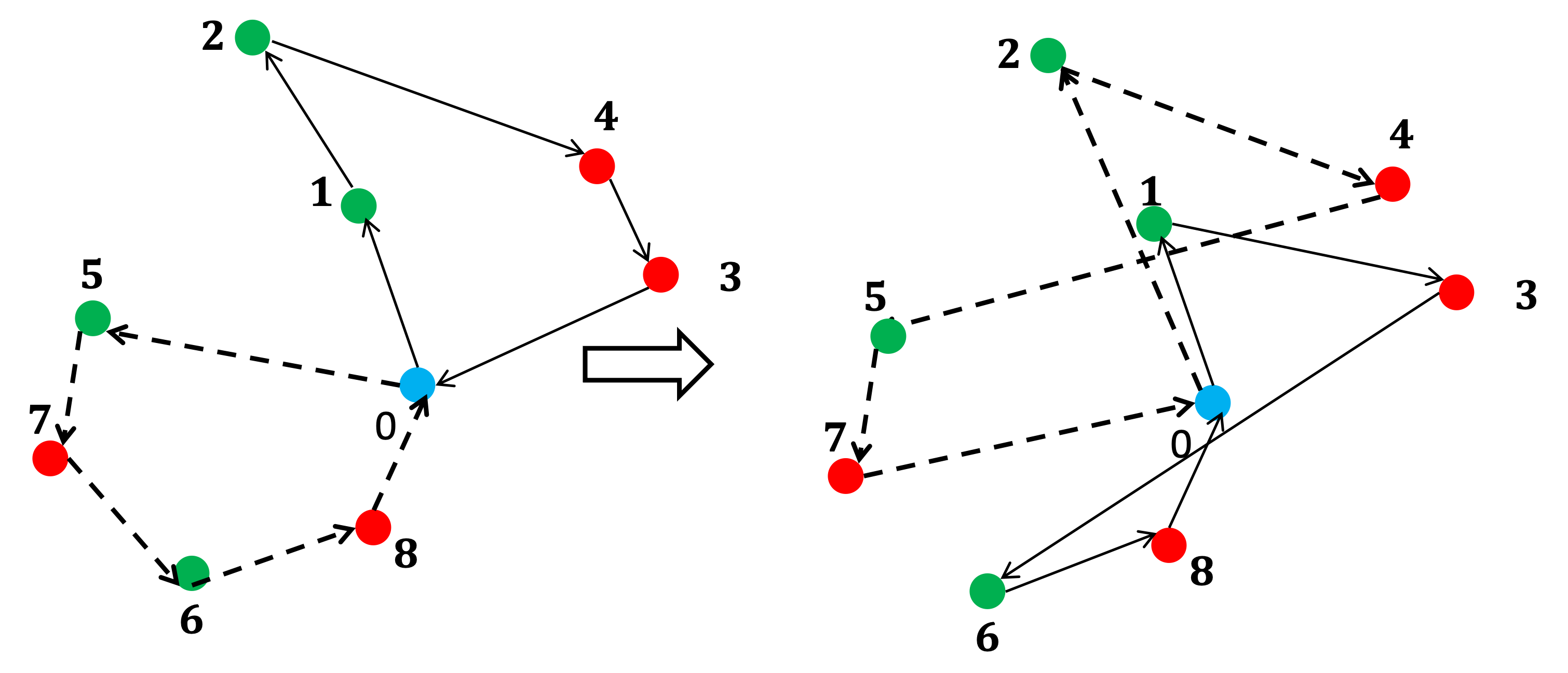}
    \vspace{-0.18in}
    \caption{N$^2_X$O with multiple vehicles}
    \label{fig:pd_operator_intra}
    \vspace{-0.1in}
\end{figure}

In problems with multi-vehicles with finite capacity (Capacitated-PDTSPs): N$^{2}_X$O can be extended to swap passengers (i.e., PD node pairs) carried by different vehicles. The number of node pairs swapped is bounded by the capacity. We demonstrate how to execute N$^{2}_X$O among two vehicles in Figure \ref{fig:pd_operator_intra}. The feasible tours of two vehicles starting from the depot are : $0\rightarrow 1 \rightarrow 2 \rightarrow 4 \rightarrow 3 \rightarrow 0$ and $0\rightarrow 5 \rightarrow 7 \rightarrow 6 \rightarrow 8 \rightarrow 0$. We swap PD node pairs $(2,4)$ with $(6,8)$ and get two new feasible tours for vehicles. It shows that N$^{2}_X$O can be executed across tours belonging to multiple vehicles without violating the precedence constraints. Proposition \ref{prop:outer} still holds. For simplicity, we omit the proof. 

In another study~\cite{ma2022pd}, an insertion operator is proposed, which is defined below. 
\begin{defn} \textbf{Insertion operator~\cite{ma2022pd}:}
    Given a PD node pair $(i,n+i)$, the insertion operator places $i$ and $n+i$ on two positions in a tour formed by nodes $\mathcal{N/}\{i,n+i\}$. Denote the positions of $i$ and $n+i$ in the new tour as $p'_i$ and $d'_{n+i}$, respectively. We have $p'_i<d'_{n+i}$. 
\end{defn}

We use Figure \ref{fig:insert_operator} to explain the insertion operator. A tour $ 0\rightarrow 2 \rightarrow ... \rightarrow 10  \rightarrow 0$  is formed by four PD node pairs, where PD node pair $(1,6)$ is not included. The operator then inserts node 1 between node 0 and 2, node 6 between node 5 and 9, and obtains a new tour $0  \rightarrow1 \rightarrow 2\rightarrow...\rightarrow 9 \rightarrow 10  \rightarrow 0$. 

\begin{prop}\label{prop:equiv}
The insertion operator is equivalent to N$_X$O (N1, N2, N3) for any node pairs. 
\end{prop}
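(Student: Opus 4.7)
The plan is to prove equivalence by showing both directions: every insertion on a pair $(i,n+i)$ can be realized as a finite composition of $N_1, N_2, N_3$ moves, and every $N_1, N_2, N_3$ move can be reproduced by a finite sequence of insertions. Taken together, these inclusions establish that the two operators generate the same family of feasible tour transformations from any starting tour $\mathcal{T}$.

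For the first direction (insertion reducible to $N_X$), I would decompose the insertion of $(i,n+i)$ from $(p_i,d_{n+i})$ to $(p'_i,d'_{n+i})$ into a sequence of adjacent single-position shifts of the two nodes. Moving $i$ one position past an adjacent node is a swap of $i$ with that neighbor: if the neighbor is a $\mathcal{P}$-node then, after re-blocking via Proposition~\ref{prop:block_sub}, the swap is an $N_1$ move; if the neighbor is a $\mathcal{D}$-node then, because $i$ is a $\mathcal{P}$-node, that $\mathcal{D}$-node must belong to a $\mathcal{D}$-block preceding $i$'s $\mathcal{P}$-block, so the swap is an $N_2$ move. The symmetric argument handles the relocation of $n+i$, and iterating the shifts places both nodes at the target positions $(p'_i,d'_{n+i})$.

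For the second direction ($N_X$ reducible to insertion), I would construct explicit insertion sequences for each of the three operators. An $N_1$ transposition of two $\mathcal{P}$-nodes $i_u,i_v$ (respectively, two $\mathcal{D}$-nodes) is produced by two insertions, one per involved PD pair, each swapping only the $\mathcal{P}$ (resp.\ $\mathcal{D}$) position while leaving the partner position unchanged. An $N_2$ swap of a $\mathcal{P}$-node $i_u$ with a preceding $\mathcal{D}$-node $n+j_r$ is realized by two insertions: first the insertion of $(i_u,n+i_u)$ places $i_u$ at the old index of $n+j_r$, and then the insertion of $(j_r,n+j_r)$ places $n+j_r$ at the old index of $i_u$. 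An $N_3$ swap of two full PD pairs is immediately two insertions, one per pair, each moving to its counterpart's positions.

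The main obstacle I anticipate is verifying feasibility of every intermediate tour when decomposing an insertion into a chain of $N_X$ moves, since the precedence condition $p_i<d_{n+i}$ must hold throughout. The delicate case is when the trajectories of $i$ and $n+i$ in their respective shifts would momentarily place $i$ past $n+i$. I plan to resolve this by a short case analysis on the relative order of $p_i,p'_i,d_{n+i},d'_{n+i}$: when $i$ moves rightward I first complete the relocation of $n+i$ (shifting it further right if $d'_{n+i}\geqslant d_{n+i}$, or leaving it in place otherwise), and symmetrically when $n+i$ moves leftward I first complete the relocation of $i$. This scheduling guarantees that the two trajectories never cross, so $p<d$ is preserved at each intermediate tour and the decomposition stays inside the feasible space.
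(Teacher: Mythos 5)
Your plan is more ambitious than the paper's proof: the paper argues only one direction (that the target positions $p'_i,d'_{n+i}$ of an insertion can be reached by node-exchange operators, via a direct swap of $i$ with whatever node occupies position $p'_i$, plus an $N_3$ overshoot in one case), whereas you attempt an exact two-way reduction through chains of adjacent transpositions. Your second direction and the crossing/scheduling discussion are reasonable in spirit. The first direction, however, has a concrete gap. Your key claim --- that when the neighbour of $i$ is a $\mathcal{D}$-node, ``that $\mathcal{D}$-node must belong to a $\mathcal{D}$-block preceding $i$'s $\mathcal{P}$-block, so the swap is an $N_2$ move'' --- is true only when that neighbour lies to the \emph{left} of $i$. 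When the insertion requires $p'_i>p_i$, you must shift $i$ rightward past $\mathcal{D}$-nodes sitting to its right; those nodes lie in blocks with index \emph{larger} than that of $i$'s block, while $N_2$ is defined only for $k>k'$, i.e., for a $\mathcal{D}$-node that precedes the $\mathcal{P}$-node. Symmetrically, shifting $n+i$ leftward past a $\mathcal{P}$-node on its left is not an $N_1$, $N_2$, or $N_3$ move either. The operator set is not closed under the inverse of $N_2$, so these two kinds of adjacent swaps are simply not available to you, even though each of them happens to preserve feasibility.

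To close the gap you need the device the paper uses in its ``case 2'': when $i$ must move rightward into territory occupied by $\mathcal{D}$-nodes, first apply $N_3$ to exchange the pair $(i,n+i)$ with a pair $(j,n+j)$ whose pickup lies beyond the obstruction, so that $i$ lands to the right of its target, and then walk $i$ leftward using only the legal $N_1$/$N_2$ swaps; dually for $n+i$ moving leftward. Be aware that this repair (like the paper's own argument) reproduces only the final positions of $i$ and $n+i$, not the literal tour produced by the insertion, because the overshoot and the non-adjacent swaps displace other nodes; if you insist on the exact tour equality that your adjacent-transposition decomposition promises, you would additionally have to show that the residual permutation of the remaining nodes can itself be undone by legal operators.
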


The proof is in Appendix \ref{appendix:proof}. Proposition \ref{prop:equiv} demonstrates the equivalence between the insertion operator and our proposed node exchange operators defined in N$_X$O.  

\begin{figure}[h]
    \centering
    \vspace{-0.1in}
    \includegraphics[scale=0.3]{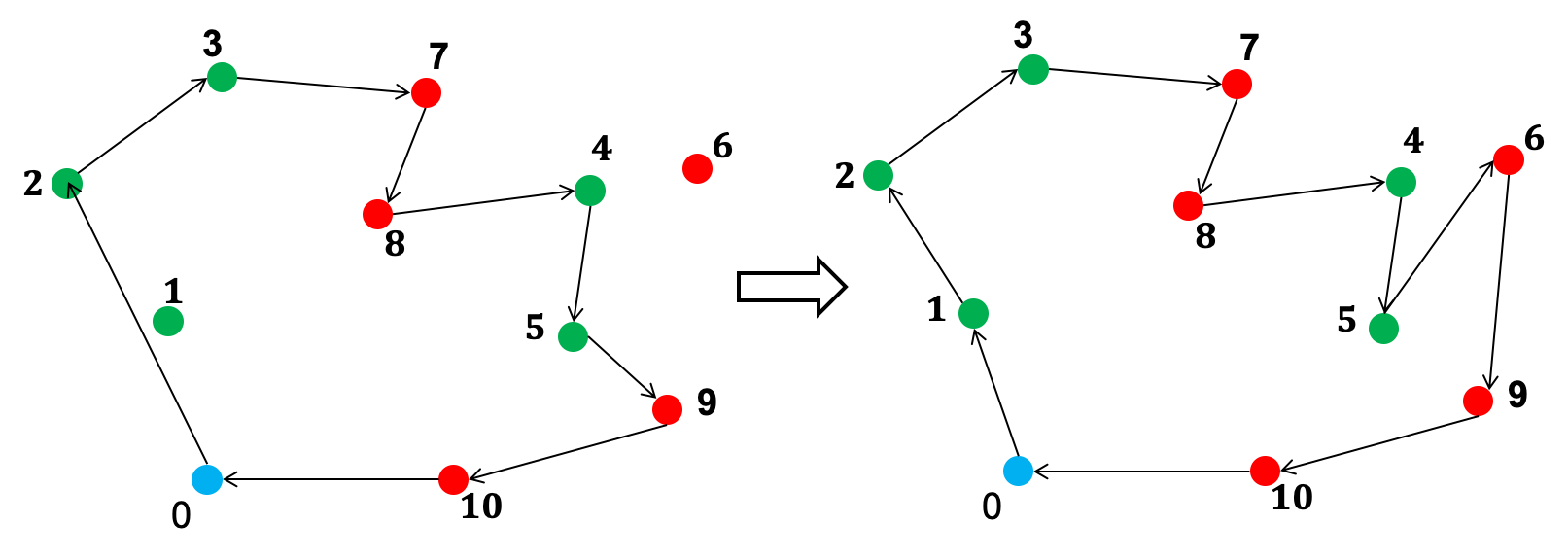}
    \vspace{-0.18in}
    \caption{Insertion operator on the toy example}
    \label{fig:insert_operator}
    \vspace{-0.1in}
\end{figure}


\noindent \textbf{Block-exchange operator (B$_X$O)}


\noindent \textbf{(B1) SameB$_X$O:} Given a tour $\mathcal{T}$ represented by a block sequence $0 \rightarrow \mathcal{T}^{(\mathcal{P})}_{1} \rightarrow \cdots \rightarrow \mathcal{T}^{(\mathcal{D})}_{K} \rightarrow 0$, SameB$_X$O swaps any two $\mathcal{P}$-blocks from a sequence of $\mathcal{P}$-blocks $\mathcal{T}^{(\mathcal{P})}_{k_1} \rightarrow \cdots \rightarrow \mathcal{T}^{(\mathcal{P})}_{k_h}$, i.e., $\forall k_u, k_v \in \{k_1,\cdots k_h \}$. Accordingly, the indices of two $\mathcal{P}$-blocks are updated to $k'_u \leftarrow k_v,\  k'_v \leftarrow k_u$, where $k'_u, k'_v$ denote the indices of blocks $\mathcal{T}^{(\mathcal{P})}_{k_u}, \mathcal{T}^{(\mathcal{P})}_{k_v}$ in the new tour $\mathcal{T}'$. Similarly, for any two $\mathcal{D}$-blocks in a sequence of $\mathcal{D}$-blocks $\mathcal{T}^{(\mathcal{D})}_{b_1} \rightarrow \cdots \rightarrow \mathcal{T}^{(\mathcal{D})}_{b_m}$, i.e., $\forall b_r, b_s \in \{b_1,\cdots b_m \}$, SameB$_X$O swaps them. Accordingly, the indices of two $\mathcal{D}$-blocks are updated to $b'_r \leftarrow b_s,\ b'_s \leftarrow b_r$, where $b'_r, b'_s$ denote the indices of blocks $\mathcal{T}^{(\mathcal{D})}_{b_r}, \mathcal{T}^{(\mathcal{D})}_{b_s}$ in the new tour.


The reward of SameB$_X$O is the change in travel cost, denoted as $r_{B_1}$, after the execution of the operator. Figure \ref{fig:sameblock} shows how to perform SameB$_X$O on the toy example. The tour is represented by a block sequence $0 \rightarrow \mathcal{T}^{(\mathcal{P})}_1 \rightarrow \mathcal{T}^{(\mathcal{D})}_2 \rightarrow \mathcal{T}^{(\mathcal{P})}_3 \rightarrow \mathcal{T}^{(\mathcal{D})}_4 \rightarrow \mathcal{T}^{(\mathcal{D})}_5 \rightarrow 0$. Two $\mathcal{D}$-blocks $\mathcal{T}^{(\mathcal{D})}_4$ and $\mathcal{T}^{(\mathcal{D})}_5$ are swapped. The tour is changed to $0 \rightarrow \mathcal{T}^{(\mathcal{P})}_1 \rightarrow \mathcal{T}^{(\mathcal{D})}_2 \rightarrow \mathcal{T}^{(\mathcal{P})}_3 \rightarrow \mathcal{T}^{(\mathcal{D})}_5 \rightarrow \mathcal{T}^{(\mathcal{D})}_4 \rightarrow 0$ (marked in light blue). Therefore, the reward is 
\begin{align}
    r_{B_1}=l_{5,10}+l_{90}-(l_{56}+l_{10,0})
\end{align}
\begin{figure}[h]
    \centering
    \vspace{-0.18in}
    \includegraphics[scale=.23]{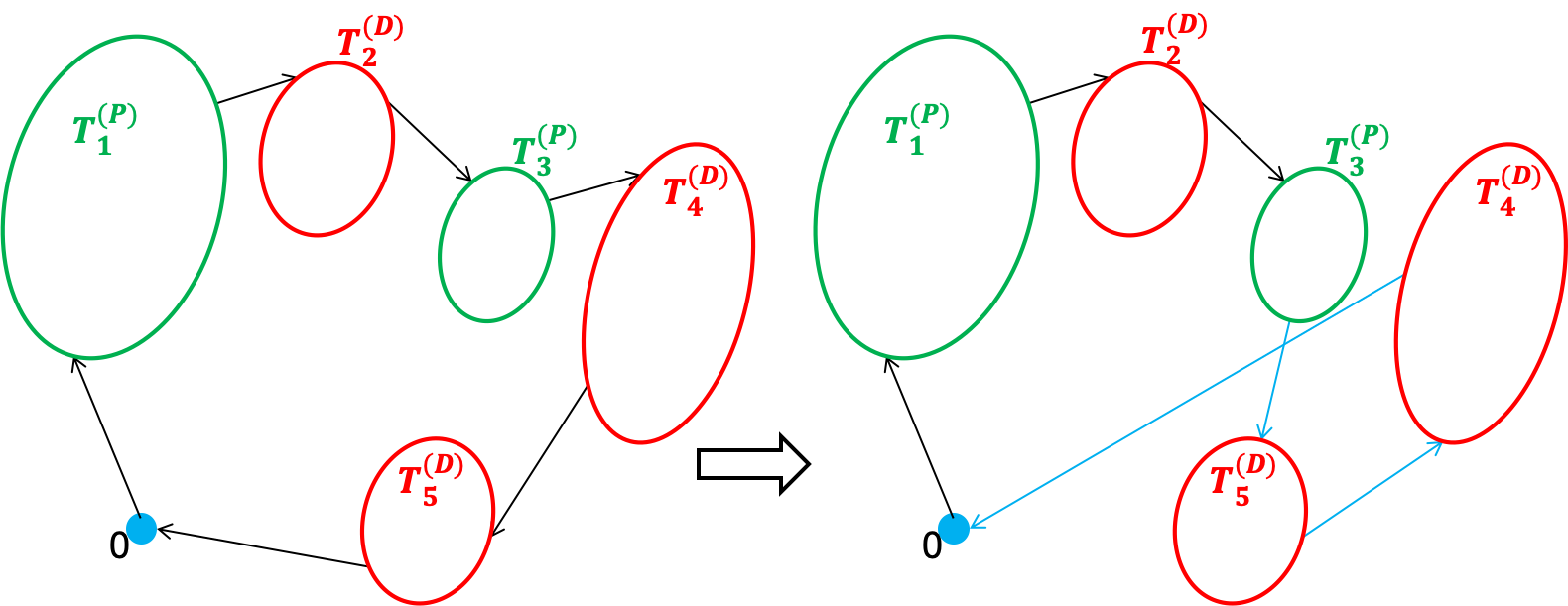}
    \vspace{-0.1in}
    \caption{SameB$_X$O on the toy example}
    \label{fig:sameblock}
    \vspace{-0.15in}
\end{figure}
\begin{prop}
    SameB$_X$O is a mapping from one feasible solution to another. 
\end{prop}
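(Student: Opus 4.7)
The plan is to reduce SameB$_X$O to a permutation of nodes inside a single $\mathcal{P}$- or $\mathcal{D}$-block and then invoke Proposition~\ref{prop:tour_precedence} (precedence at the block level). Take the $\mathcal{P}$-block case first. The operator swaps two blocks $\mathcal{T}^{(\mathcal{P})}_{k_u}$ and $\mathcal{T}^{(\mathcal{P})}_{k_v}$ inside a contiguous run $\mathcal{T}^{(\mathcal{P})}_{k_1}\rightarrow\cdots\rightarrow\mathcal{T}^{(\mathcal{P})}_{k_h}$ of $\mathcal{P}$-blocks in the block representation of the input tour $\mathcal{T}$. By Proposition~\ref{prop:block_sub} this run can be merged into a single $\mathcal{P}$-block $\tilde{\mathcal{T}}^{(\mathcal{P})}$ occupying the same interval of positions, and after the swap the same interval is occupied by a $\mathcal{P}$-block $\tilde{\mathcal{T}}^{\prime(\mathcal{P})}$ containing exactly the same set of nodes but in a permuted order. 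Consequently the overall block skeleton of the tour is unchanged outside this interval, and only the relative order of $\mathcal{P}$-nodes inside $\tilde{\mathcal{T}}^{(\mathcal{P})}$ differs from the input.

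First I would verify that the output is a Hamiltonian cycle: the swap neither adds nor deletes nodes, so every node still appears exactly once, meeting Definition~\ref{defn:hamiltonian}. Then I would check the precedence condition (Definition~\ref{defn:tour}) for every PD pair $(i,n+i)$ by applying the block-level statement in Proposition~\ref{prop:tour_precedence} to the new tour. If $i \notin \mathcal{N}(\tilde{\mathcal{T}}^{(\mathcal{P})})$ then both $p_i$ and $d_{n+i}$ are untouched, and feasibility transfers from $\mathcal{T}$. If $i \in \mathcal{N}(\tilde{\mathcal{T}}^{(\mathcal{P})})$, then $n+i$ must lie in some $\mathcal{D}$-block $\mathcal{T}^{(\mathcal{D})}_{k'}$ with block index strictly greater than $k_h$ in the input tour, since $\tilde{\mathcal{T}}^{(\mathcal{P})}$ contains only $\mathcal{P}$-nodes and since the original tour is feasible (ruling out $k' < k_1$). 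After the swap $i$ is still inside $\tilde{\mathcal{T}}^{\prime(\mathcal{P})}$, whose block index in the new tour remains strictly less than $k'$, so Proposition~\ref{prop:tour_precedence} yields precedence for $(i,n+i)$. The $\mathcal{D}$-block case is entirely symmetric: for any PD pair with $n+i$ inside the swapped $\mathcal{D}$-run, the corresponding $\mathcal{P}$-node lies in a $\mathcal{P}$-block strictly before the run, and that block's position is not touched.

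The main obstacle is pinning down the scope of the operator. A reading that permits swapping two same-type blocks separated by an opposite-type block breaks the statement, because a $\mathcal{P}$-node can then be reordered past the $\mathcal{D}$-block holding its mate; a quick check on Figure~\ref{fig:5_node_tour} swapping $\mathcal{T}^{(\mathcal{P})}_{1}$ with $\mathcal{T}^{(\mathcal{P})}_{3}$ produces exactly such a violation. So the first step of the write-up must make explicit that $\mathcal{T}^{(\mathcal{P})}_{k_1}\rightarrow\cdots\rightarrow\mathcal{T}^{(\mathcal{P})}_{k_h}$ denotes a contiguous run in the block representation. Once that restriction is in place, the argument is essentially the same merging trick that powers Proposition~\ref{prop:inner} for IntraBlock N$_X$O, applied to the block $\tilde{\mathcal{T}}^{(\mathcal{P})}$ formed by Proposition~\ref{prop:block_sub} rather than to a pre-existing block.
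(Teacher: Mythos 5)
Your proposal is correct and follows essentially the same route as the paper's proof: both rest on the fact that every $\mathcal{P}$-node involved in the swap stays within the positional interval of the contiguous run $\mathcal{T}^{(\mathcal{P})}_{k_1}\rightarrow\cdots\rightarrow\mathcal{T}^{(\mathcal{P})}_{k_h}$, while Proposition~\ref{prop:tour_precedence} places all of their $\mathcal{D}$-mates in blocks after $k_h$ and hence at strictly larger positions (the paper does this by bounding indices against $\max\{p_{i_h}\}$, which is exactly what your merge-then-apply-IntraBlock reduction unpacks to). Your explicit insistence on contiguity of the run, with the counterexample of swapping $\mathcal{T}^{(\mathcal{P})}_{1}$ and $\mathcal{T}^{(\mathcal{P})}_{3}$ in the toy tour, is a worthwhile clarification of an assumption the paper leaves implicit.
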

The proof is in Appendix \ref{appendix:proof}.

\noindent \textbf{(B2) MixB$_X$O:} Given a block sequence $\mathcal{T}: 0 \rightarrow \mathcal{T}^{(\mathcal{P})}_{1} \rightarrow \cdots \rightarrow \mathcal{T}^{(\mathcal{D})}_{K} \rightarrow 0$, MixB$_X$O swaps a $\mathcal{P}$-block $\mathcal{T}^{(\mathcal{P})}_{k_r}$ and a $\mathcal{D}$-block $\mathcal{T}^{(\mathcal{D})}_{k_s}, \forall k_r>k_s$. Accordingly, the indices of these two blocks are updated to $k'_r \leftarrow k_s,\  k'_s \leftarrow k_r$, respectively, where $k'_r,k'_s$ denote the indices of $\mathcal{P}$-block $\mathcal{T}^{(\mathcal{P})}_{k_r}$ and $\mathcal{D}$-block $\mathcal{T}^{(\mathcal{D})}_{k_s}$ in the new tour.

The reward of MixB$_X$O is the change in travel cost, denoted as $r_{B_2}$, after the execution of the operator. In Figure \ref{fig:mixedblock}, $\mathcal{T}^{(\mathcal{P})}_{3}$ and $\mathcal{T}^{(\mathcal{D})}_{2}$ are swapped. The reward of MixB$_X$O is calculated as:
\begin{align}
    r_{B_2} =l_{34}+l_{86}-(l_{37}+l_{56})
\end{align}

\begin{figure}[h]
    \centering
    \vspace{-0.12in}
    \includegraphics[scale=0.23]{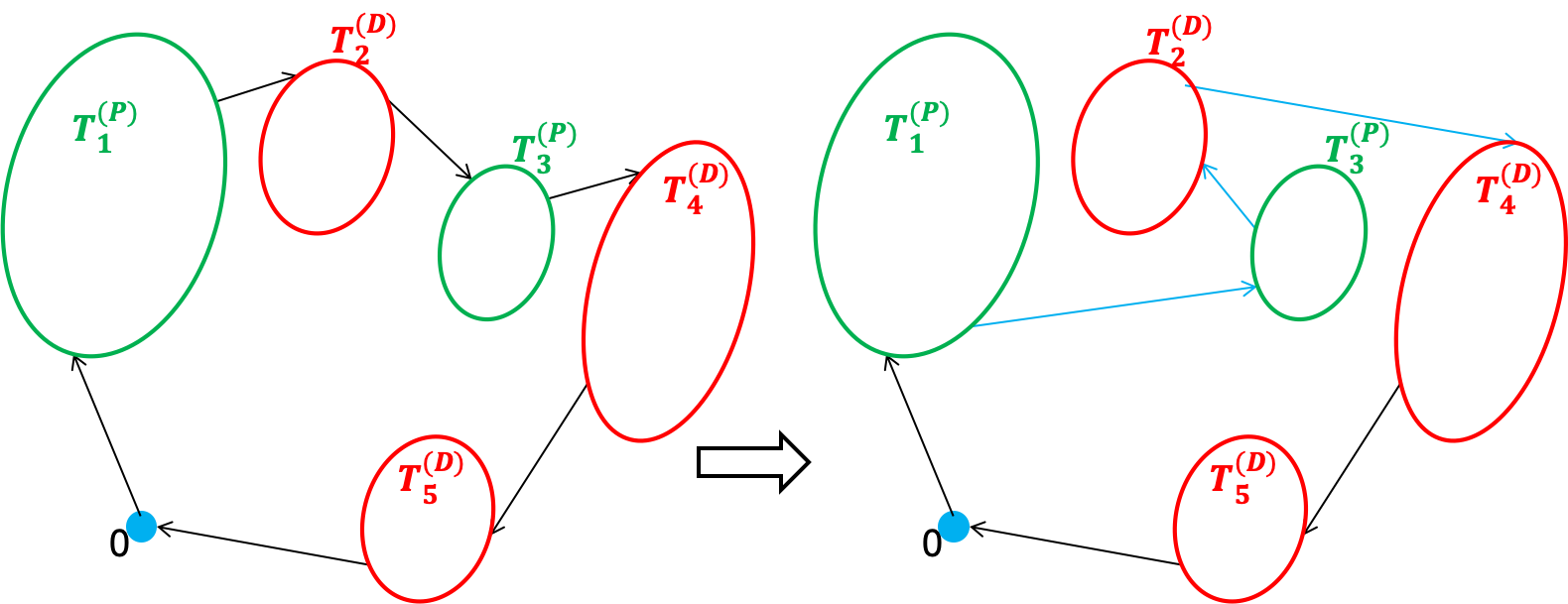}
    \vspace{-0.1in}
    \caption{MixB$_X$O on the toy example}
    \label{fig:mixedblock}
    \vspace{-0.12in}
\end{figure}

\begin{figure*}[t]
   \centering
   \vspace{-0.12in}
   \includegraphics[scale=.35]{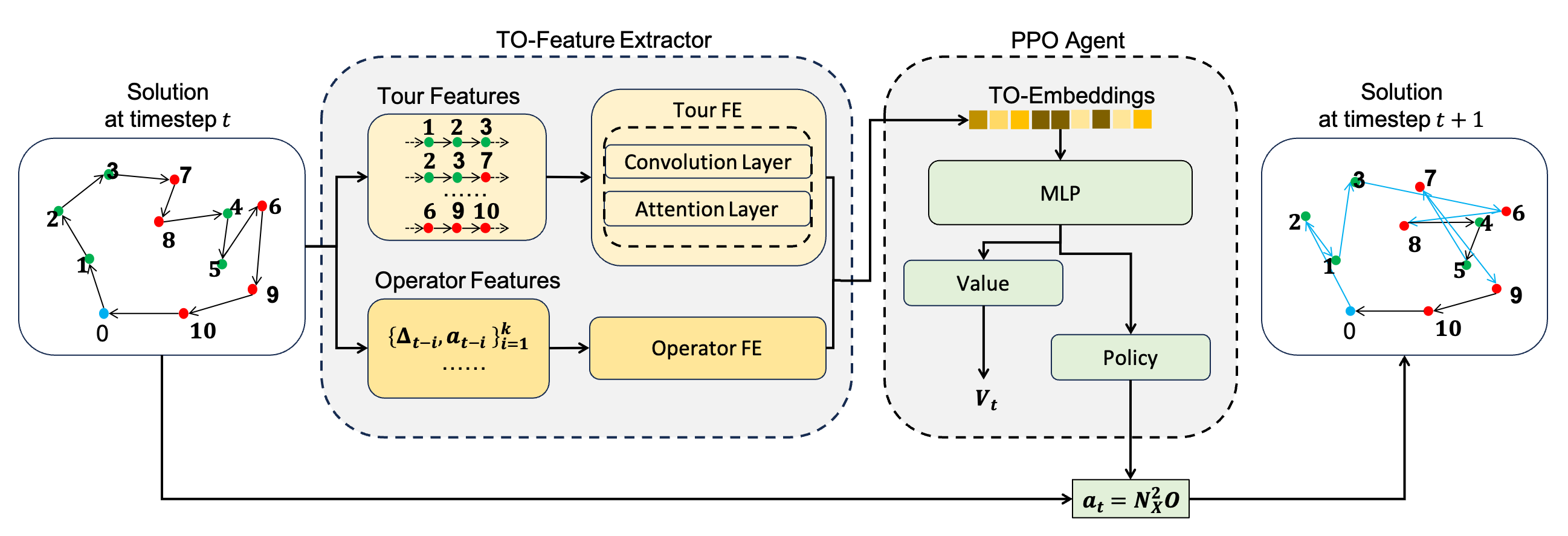}
   \vspace{-0.15in}
    \caption{An illustration of L2T policy network}
    \label{fig:flow_appendix}
    \vspace{-0.1in}
\end{figure*}

\begin{prop}
    MixB$_X$O is a mapping from one feasible solution to another. 
\end{prop}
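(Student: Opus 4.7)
The plan is to mirror the structure of the earlier feasibility proofs (Prop.\ 4.2 and Prop.\ 4.3) by leveraging the block-level precedence result of Proposition \ref{prop:tour_precedence}. First I would fix the original block representation $\mathcal{T}: 0 \rightarrow \mathcal{T}^{(\mathcal{P})}_{1} \rightarrow \cdots \rightarrow \mathcal{T}^{(\mathcal{D})}_{K} \rightarrow 0$, and let $\mathcal{T}'$ denote the block sequence obtained after MixB$_X$O swaps $\mathcal{T}^{(\mathcal{P})}_{k_r}$ (at position $k_r$) with $\mathcal{T}^{(\mathcal{D})}_{k_s}$ (at position $k_s$), where $k_r > k_s$. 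Since the operator only permutes two blocks while leaving the internal sequence of every block untouched and preserving the multiset of all $\mathcal{P}$- and $\mathcal{D}$-nodes, $\mathcal{T}'$ is still a closed walk visiting each node exactly once, hence a Hamiltonian cycle (Definition \ref{defn:hamiltonian}). What remains is to verify that the precedence constraints hold in $\mathcal{T}'$.

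Next I would pick an arbitrary PD node pair $(i, n+i)$, and let $\mathcal{T}^{(\mathcal{P})}_{k_1}$ and $\mathcal{T}^{(\mathcal{D})}_{k_2}$ be the blocks containing $i$ and $n+i$ in $\mathcal{T}$, respectively, so that $k_1 < k_2$ by Proposition \ref{prop:tour_precedence}. Writing $k_1', k_2'$ for the corresponding block indices in $\mathcal{T}'$, I would break into four cases depending on whether $k_1 = k_r$ and whether $k_2 = k_s$:
\begin{enumerate}
    \item If $k_1 \neq k_r$ and $k_2 \neq k_s$, then $k_1' = k_1 < k_2 = k_2'$.
    \item If $k_1 \neq k_r$ and $k_2 = k_s$, then $k_1' = k_1 < k_s < k_r = k_2'$, using $k_1 < k_2 = k_s$.
    \item If $k_1 = k_r$ and $k_2 \neq k_s$, then $k_1' = k_s < k_r < k_2 = k_2'$, using $k_1 = k_r < k_2$.
    \item If $k_1 = k_r$ and $k_2 = k_s$, the original feasibility requires $k_r = k_1 < k_2 = k_s$, which contradicts the operator's precondition $k_r > k_s$; hence this case cannot arise.
\end{enumerate}
In each realizable case I obtain $k_1' < k_2'$, so the block-level precedence is preserved in $\mathcal{T}'$.

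Finally, I would invoke Proposition \ref{prop:tour_precedence} in the converse direction (or rather, combine the block-level ordering with the fact that the internal visiting order within each block is unchanged) to conclude that the node-level precedence $p_i < d_{n+i}$ holds for every PD pair in $\mathcal{T}'$, so $\mathcal{T}'$ is a tour. The main obstacle I anticipate is making the case analysis airtight, in particular ensuring that the ruled-out case (4) is genuinely forbidden by the original feasibility together with $k_r > k_s$; once that observation is clearly articulated, the rest of the argument is essentially bookkeeping on block indices and parallels the proof of Proposition \ref{prop:outer} already relegated to the appendix.
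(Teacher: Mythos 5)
Your proof is correct, but it takes a genuinely different route from the paper's. The paper argues directly at the level of node indices and only for the nodes inside the two swapped blocks: it shows that every pickup $i_u$ in the relocated $\mathcal{P}$-block satisfies $p'_{i_u} < p_{i_l}+1 < d_{n+i_u}$ (its block moves earlier while its delivery block, which by Proposition~\ref{prop:tour_precedence} lies after position $k_r$, stays put) and symmetrically that every delivery $n+j_r$ in the relocated $\mathcal{D}$-block satisfies $d'_{n+j_r} > d_{n+j_1}-1 > p_{j_r}$; the pairs untouched by the swap are left implicit. You instead lift the whole argument to the block level: an exhaustive four-way case analysis on whether the pickup's block is the swapped $\mathcal{P}$-block and whether the delivery's block is the swapped $\mathcal{D}$-block, showing $k_1' < k_2'$ in every realizable case and correctly ruling out the fourth case as incompatible with the original feasibility plus $k_r > k_s$, then descending to node-level precedence via the (easy, and worth stating explicitly) converse of Proposition~\ref{prop:tour_precedence}: if block $k_1'$ precedes block $k_2'$ in the sequence, every node of the former precedes every node of the latter. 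Your version is more systematic and actually more complete than the paper's, since it explicitly certifies the PD pairs lying outside the two swapped blocks (e.g., a pickup in an intermediate block whose delivery sits beyond position $k_r$), which the paper's two inequalities do not address; the price is a longer case analysis where the paper gets away with two one-line index bounds. Both arguments are valid.
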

The proof is in Appendix \ref{appendix:proof}.

\section{Solution Approach}
\label{sec:sol}
We propose Algorithm \ref{alg:l2t} (aka. L2T) to solve PDTSP. We first initialize policy and value networks, which are parameterized by $\theta$ and $\phi$, respectively. The input to the policy network is a state (i.e., a tour), and the output is policy distribution for each operator introduced in Section \ref{sec:opt_design}. During the $w$th episode of the training process (Line 3 in Algorithm \ref{alg:l2t}), we begin by constructing a initial tour (i.e., initial state $s$). In Line 5, we use policy networks to generate a sequence of operators for the initial tour until the terminal step $M$. We then execute the operators and transform the tour into new tours, thereby triggering a state transition from $s$ to $s'$. We calculate the reward corresponding to each operator. The tuple $(s,a,r,s')$ is stored in the replay buffer. Lines 11-13 update the policy and value networks. Lines 14 calculates the minimum tour cost as $C_{w+1}=\min_{\mathcal{T} \in \mathbf{s}} C(\mathcal{T})$. Line 15 checks convergence by computing the difference in the minimum tour cost between two episodes. The convergence condition is $|C_{w+1}-C_{w}|<\epsilon$. Note that $C$ is the travel cost of a tour, while the value function $V$ is the cumulative reward corresponding to the total change in tour cost from that of the initial tour. 

Figure \ref{fig:flow_appendix} shows the structure of our policy network. The policy network employs a feature extractor module, which extracts the salient features of tours and operator-specific information. For a given tour, we extract a feature matrix $\in \mathbb{R}^{|\mathcal{N}|\times 12}$, which is composed of the locations, type (pickup or delivery),  preceding and succeeding nodes, and distance between two adjacent nodes. As for the operator, the features derived are the improvement of the tour, the difference between the current tour and the best known tour, the number of steps without improvement, and $N$ recent historical operators applied and their improvements, which results in a vector $\in \mathbb{R}^{2N+3}$. With the raw features in hand, we leverage a neural architecture to process the features. During the forward pass, the raw tour features are passed through a sequence of 1x1 convolution layers and enhanced by self multi-head attention. The resultant embeddings are concatenated with the operator features to generate the final embeddings used by downstream policy. By default, Feature Extractor uses a hidden dimension of 256, 4 heads for multi-head self attention. ReLU activation is applied. As for the network architecture of the PPO agent, the processed features are fed into an MLP which consists of two sequential layers, each of dimension as 256. The agent then branches out to two separate linear layers to output value and action probabilities. 

\begin{algorithm}[h]
		\caption{L2T}
		\label{alg:l2t}
		\begin{algorithmic}[1]
			\State Initialize policy network parameterized by $\theta$; value network parameterized by $\phi$; replay buffer; 
			\For{$w\leftarrow 0$ to $W$}
                \State Initialize state $s$ (i.e., tour construction);
                \While{not at terminal step $M$}
                \State Generate action (i.e., operator) from $\pi_{\theta_w}(s)$;
                \State Execute operators for the tour;
                \State Obtain reward $r$ and new state $s'$
                \State Store $(s,a,r,s')$ into buffer;
                \EndWhile
                \State Sample a batch of experience from buffer;
                \State Compute advantages: \small{$\frac{1}{\mathcal{K}}\sum^{\mathcal{K}}_{l=1}[r+V_{\phi_w}(s'_{l})-V_{\phi_w}(s_l)]$}
                \State Update policy and value network;
                \State Obtain the minimum tour cost $C_{w+1}$;
                \State Check convergence: $|C_{w+1}-C_{w}|<\epsilon$.
			\EndFor
		\end{algorithmic}
\end{algorithm}

Baselines include:
\begin{enumerate}
    \item Google OR tools: Google OR tools is an open-source, fast and portable software suite for solving combinatorial optimization problems \cite{googleor}.
    \item Gurobi: Gurobi is a mathematical optimization solver, widely used for solving large-scale linear, quadratic, and mixed-integer programming problems \cite{gurobi}.
    \item Pointer Network (Ptr-Net): The Ptr-Net is a sequence-to-sequence model for TSPs. We incorporate the Ptr-Net into a RL framework \cite{bello2016rl} to solve PDTSP by iteratively checking the feasibility of the route at each node.
    \item Transformer: Transformer-based architecture modifies the Ptr-Net via an attention mechanism \cite{kool2018attention}. We replace the Ptr-Net in RL with a transformer to solve PDTSP.
    \item LKH3.0: It is a famous heuristic algorithm for solving TSPs and vehicle routing problems (VRPs) \cite{lkh3}. Note that LKH1.0 cannot solve PDTSP. We thus use LKH3.0.
    \item L2T-Naive: We replace our proposed operators with the naive operator defined in Section \ref{sec:pre} and implement the RL algorithm (L2T). 
    \item L2T (Individual operator): To demonstrate the performance of our unified operator set, we simply run each individual operator and implement the RL algorithm (L2T), including L2T-N1, L2T-N2, L2T-N3, L2T-B1, L2T-B2 and L2T-insertion.
\end{enumerate}

\section{Experiment}
\label{sec:numer}
In this section, we present our experiment results. We first introduce the experiment setup. 

\noindent \textbf{Experiment setup:} PDTSPs with five problem scales are considered, namely \( n = 5, 10, 20, 30, 50 \), resulting in node sets \( |\mathcal{N}| \) of sizes 11, 21, 41, 61, and 101, respectively. We employ PDTSP instances sourced from the Grubhub Test Instance Library \cite{lkh3}, where the location of each node, as well as the depot, is uniformly sampled from a unit square. The traveling cost between two nodes, $c_{ij}$, is the corresponding Euclidean distance. All computational experiments are conducted on a Google Cloud Platform instance of type n1-highmem-8, equipped with two NVIDIA T4 GPUs.



\noindent \textbf{Numerical results:} Figure \ref{fig:convergence} visualizes the convergence performance of our algorithm. The x-axis is the episode, and the y-axis denotes the convergence gap for instances with $|\mathcal{N}|=11,21,41,61,101$, respectively. In comparison to instances with $|\mathcal{N}|=11,21,41$, it takes more episodes for the minimum cost with $|\mathcal{N}|=61,101$ to converge. More visualizations are in Appendix.


In Table 1, we compare the average minimum travel cost obtained by our method with the baselines. The results indicate that our method outperforms the baselines in terms of average travel cost. Particularly, the average distance achieved by L2T is significantly shorter (around 10.3\%) than that achieved by Google OR tools and Gurobi when $|\mathcal{N}|\geqslant 41$. Our algorithm achieves the minimum travel cost among all methods when $|\mathcal{N}|=61,101$. It is shown that the Ptr-Net and transformer fail to find feasible solutions within a reasonable time when $|\mathcal{N}|\geqslant 21$. 
This is because the end-to-end approach using sequence-to-sequence models outputs infeasible Hamiltonian cycles and fails to capture the precedence constraints. Table 2 shows the training time when learning methods converge. The training time of L2T-naive significantly increases as the problem scale grows larger. This is due to the fact that L2T-naive must explore a space containing a large number of infeasible Hamiltonian cycles (Proposition \ref{prop:num_tour}), which makes the training process time-consuming. In comparison to our method (L2T), L2T-B1, L2T-B2 achieve a lower training time. However, these operators output much worse solutions (Table 1). This is because individual operators perform limited functions. Block operators (B1, B2) cannot improve solutions anymore by swapping nodes within a block. They converge quickly but lose some solution quality. L2T-N3 achieves similar solution quality but has higher computational cost. This is because it cannot swiftly exchange a batch of nodes as block operators do. It takes longer to converge for achieving the same solution quality. We list the training time of the cutting-edge method \cite{ma2022pd}: 1 day for N=21, 3 days for N=51, 7 days for N=101. Other baselines go beyond 10 days. They utilize insertion operators similar to the individual operator L2T-insertion. Table 2 shows that the training time of L2T-insertion is 8 times more than others.

For Capacitated-PDTSPs, we execute operators across tours belonging to multiple vehicles (see Figure \ref{fig:pd_operator_intra}). we compared our method with OR-Tools on an instance with 500 nodes and a maximum of 30 vehicles. We set a time limit of 12 hours. Our method found a solution with cost 22.165, while OR-Tools found a solution with cost 24.546.

\begin{figure}[ht]
    \centering
   \includegraphics[scale=.3]{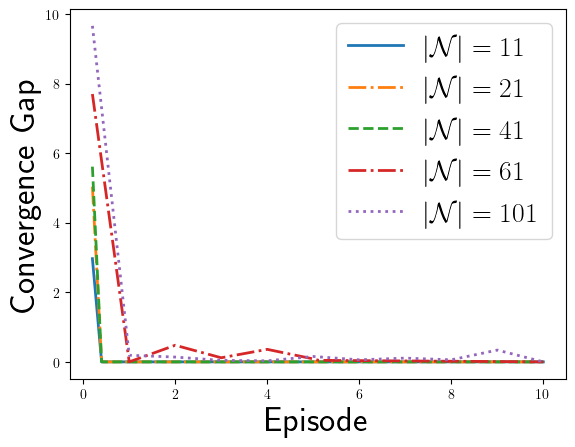}
   \vspace{-0.15in}
    \caption{Convergence performance}
    \label{fig:convergence}
    \vspace{-0.15in}
\end{figure}

\begin{table}[ht]
    \footnotesize
    \centering
    \begin{tabular}{c|c c c c c c}
        \hline
        & \multicolumn{6}{c}{$|\mathcal{N}|$} \\
    \cline{2-7}
        & $11$ & $21$ & $41$ & $61$ & $101$ & $501$ \\
        \hline
        Google OR Tools & 3.124 & 4.729 & 6.982 & 7.603 & 10.307 & 21.901 \\
        Gurobi & 3.031 & 4.555 & 6.779 & 7.574 & - & - \\
        Ptr-Net+RL & 4.298 & - & - & - & - & - \\
        Transformer+RL & 3.965 & - & - & - & - & - \\
        LKH3.0 & 3.031 & 4.555 & 6.311 & 7.195 & 9.255 & \textbf{20.277} \\
        L2T-Naive & 3.031 & 4.555 & 6.311 & 7.214 & 10.321 & - \\
        L2T-N1 & 3.031 & 4.555 & 6.382 & 7.241 & 10.315 & - \\
        L2T-N2 & 3.031 & 4.556 & 6.944 & 8.284 & 11.428 & - \\
        L2T-N3 & 3.031 & 4.555 & 6.358 & 7.194 & 9.504 & - \\
        L2T-B1 & 3.031 & 4.602 & 6.673 & 7.743 & 10.516 & - \\
        L2T-B2 & 3.031 & 4.600 & 6.719 & 7.621 & 10.759 & - \\
        L2T-Insertion & 3.031 & 4.555 & 6.311 & 7.245 & 10.519 & - \\
        L2T (Ours) & \textbf{3.031} & \textbf{4.555} & \textbf{6.311} & \textbf{7.189} & \textbf{9.242} & 20.591 \\
        \hline
    \end{tabular}
    \caption{Comparison of cost on PDTSP instances}
    \vspace{-0.15in}
    \label{tab:cost_comp}
\end{table}

\begin{table}[ht]
    \footnotesize
    \centering
    \begin{tabular}{c|c c c c c}
    \hline
    & \multicolumn{5}{c}{$|\mathcal{N}|$} \\
    \cline{2-6}
        & $11$ & $21$ & $41$ & $61$ & $101$ \\
        \hline
         L2T-Naive & 282.9 & 480.5 & 1906.0 & 6093.5 & 30634.8 \\
         L2T-N1 & 219.6 & 398.5 & 837.9 & 1881.3 & 7353.8 \\
         L2T-N2 & 281.3 & 362.9 & 532.4 & 789.9 & 1786.8 \\
         L2T-N3 & 278.5 & 397.6 & 852.7 & 2080.4 & 7296.8 \\
         L2T-B1 & 291.5 & 380.0 & 605.8 & 931.6 & 1968.9 \\
         L2T-B2 & 283.0 & 396.2 & 604.5 & 930.9 & 2099.1\\
         L2T-Insertion & 281.3 & 444.5 & 2827.2 & 9597.6 & 34280.0 \\
         L2T (Ours) & 270.8 & 358.7 & 575.1 & 1100.6 & 4459.6 \\
         \hline
    \end{tabular}
    \caption{Training time (s)}
    \label{tab:time_comp}
\end{table}
\vspace{-0.2cm}

\section{Conclusion}
\label{sec:conclu}

This paper leverages a unified learning operator set in an RL framework to solve PDTSP. By confining the search space to feasible solutions, we have effectively mitigated the computational challenges raised by precedence constraints in PDTSP. Our learning operators, designed to preserve solution feasibility, have demonstrated superior computational efficiency and solution quality across different problem scales. The proposed approach empowers a range of real-world applications in transportation, mobility, and logistics.

\bibliographystyle{ACM-Reference-Format}
\bibliography{aamas24}
  
\section{Appendices}

\subsection{IP Formulation}
\label{append:ip}

In addition to the notation already introduced in the main text, we define $\delta(S)=\{(i,j)\in\mathcal{L}:i\in S, j \notin S \text{ or } i \notin S, j \in S \}$ for any set of nodes $S \subseteq \mathcal{N}$. If $S=\{i\}$, we write $\delta(i)$ instead of $\delta(\{i\})$. In addition, we distinguish between the starting depot and the ending depot by designating the ending depot as \(2n+1\). The PDTSP was formulated as a binary linear program \cite{dumi2009tsp} by associating a binary variable $x_{ij}$ with every edge $(i,j)\in\mathcal{L}$:
\begin{align}
\min_{x_{ij}: (i,j) \in \mathcal{L}} & \sum_{(i,j) \in \mathcal{L}} c_{ij} x_{ij} \\
s.t.\ & x_{0,2n+1}=1, \\ 
& x(\delta (i))=2, \forall i \in \mathcal{N} \label{eq:degree}\\
& x(\delta (S))\geq2, \forall S \in \mathcal{N}, 3 \leq |S| \leq |V|/2 \label{eq:sec_1} \\
& x(\delta (S) )\geq 4, \forall S\in \mathcal{U} \label{eq:precedence}\\
& x_{ij} \in \{0,1\}, \forall (i,j)\in \mathcal{L},
\end{align}
where, $\mathcal{U}$ is the collection of subsets $S \subset \mathcal{N}$ satisfying $3 \leq |S| \leq |\mathcal{N}| - 2$ with $0\in S, 2n+1\notin S$ and for which there exists $i\in \mathcal{P}$ such that $i\notin S$ and $n+i \in S$. Constraints (\ref{eq:degree}) are degree constraints, (\ref{eq:sec_1}) are subtour elimination constraints (SEC), and (\ref{eq:precedence}) are precedence constraints which ensure that node $i$ is visited before $n+i$ for every $i\in\mathcal{P}$.









\subsection{Proof for propositions}
\label{appendix:proof}
\noindent \textbf{Proof for Proposition 3.1}
\begin{proof}
    The number of permutations for all $\mathcal{P}$- and $\mathcal{D}$-nodes regardless of precedence constraints is $(2n)!$, which is the total number of Hamiltonian cycle in PDTSP. 

    Now we will count the number of tours. First, let us look at the number of permutations for $\mathcal{P}$-nodes. Since the size of $\mathcal{P}$-nodes is $n$, its permutation is $n!$. A tour consists of both $\mathcal{P}$- and $\mathcal{D}$-nodes. So now we need to insertion all the $n$ corresponding $\mathcal{D}$-nodes into each permutation, in order to count the total number of tours. 
    
    Let us focus on an arbitrary permutation of $\mathcal{P}$-nodes, denoted as $(i_1,\cdots,i_n), \forall i_1,\cdots, i_n\in{\mathcal{P}}$. 
    We look backwards from the last $\mathcal{P}$-node, i.e., $i_n$, in the permutation, and check the possible positions of its $\mathcal{D}$-node, i.e., node $n+i_n$. 
    To fulfil the precedence constraints, node $n+i_n$ can only be placed after $i_n$, which is $(i_1,\cdots,i_n,n+i_n)$. 
    Then we decide where to place node $n+i_{n-1}$, which is the $\mathcal{D}$-node of the second last $\mathcal{P}$-node, i.e., $i_{n-1}$ in the permutation. 
    Constrained by the precedence constraints, node $n+i_{n-1}$ can be placed in three slots, namely, 
    (1) after node $n+i_n$, i.e., $(i_1,\cdots,i_n,n+i_n,n+i_{n-1})$, 
    (2) before node $n+i_n$, i.e., $(i_1,\cdots,i_n,n+i_{n-1},n+i_n)$, 
    and (3) before node $i_n$, i.e., $(i_1,\cdots,n+i_{n-1},i_n,n+i_n)$. 
    This process continues when we finish checking every $\mathcal{D}$-node. 
    By induction, the number of all tours for each $\mathcal{P}$-node permutation $(i_1,\cdots,i_n)$ is $1\cdot 3 \cdot \cdots\cdot (2n-1)=\frac{(2n)!}{2^n\cdot n!}$. Therefore, the total number of tours is $\frac{(2n)!}{2^n}$.
\end{proof}

\noindent \textbf{Proof for Proposition 3.2}
\begin{proof}
According to Definition \ref{defn:tour} and \ref{defn:block}, a tour $\mathcal{T}$ can be recast as $0 \rightarrow \mathcal{T}^{(\mathcal{P})}_{1} \rightarrow \cdots \rightarrow\mathcal{T}^{(\mathcal{P})}_{k} \rightarrow \cdots \rightarrow \mathcal{T}^{(\mathcal{D})}_{K} \rightarrow 0$ where $k$ denotes the index of a block in the block sequence, and $K$ is the total number of blocks.    
\end{proof}

\noindent \textbf{Proof for Proposition 3.3}
\begin{proof}
    According to Definition \ref{defn:block}, $\forall \mathcal{T}^{(\mathcal{P})}: i_1 \rightarrow \cdots \rightarrow i_{u} \rightarrow i_{v} \rightarrow \cdots \rightarrow i_l (l\geqslant2)$ , it is decomposed as two adjacent $\mathcal{P}$-blocks $i_1 \rightarrow \cdots \rightarrow i_{u}$ and $i_v \rightarrow \cdots \rightarrow i_{l}$. Vice versa, two adjacent $\mathcal{P}$-blocks can be merged into one. 
\end{proof}

\noindent \textbf{Proof for Proposition 3.4}
\begin{proof}
    We consider $\mathcal{T}^{(\mathcal{P})}_{k_1}: i_1\rightarrow \cdots \rightarrow i \rightarrow \cdots \rightarrow i_l$ and $\mathcal{T}^{(\mathcal{D})}_{k_2}: n+j_1 \rightarrow \cdots \rightarrow n+i \rightarrow \cdots \rightarrow n+j_m$. If $k_2<k_1$, we have $d_{n+i} \leqslant d_{n+j_m}<p_{i_1}\leqslant p_{i}$, which violates precedence constraints in Definition \ref{defn:tour}. Therefore, we have $k_1<k_2$.
\end{proof}
\noindent \textbf{Proof for Proposition 4.2}
\begin{proof}
    We only need to show that the operator does not change the feasibility of the Hamiltonian cycle. We look into $\mathcal{P}$-block $\mathcal{T}^{(\mathcal{P})}_{k}$ in a tour $\mathcal{T}$ where $\mathcal{T}^{(\mathcal{P})}_{k}: i_1\rightarrow \cdots \rightarrow i_l$. We have $\forall i_u, i_v \in \mathcal{N}(\mathcal{T}^{(\mathcal{P})}_{k})$,
    \begin{align}
        & p_{i_u}\leqslant p_{i_l}, p_{i_v} \leqslant p_{i_l} \nonumber 
    \end{align}
    According to Proposition \ref{prop:tour_precedence}, We have:
    \begin{align}
        \forall i_u, i_v \in \mathcal{N}(\mathcal{T}^{(\mathcal{P})}_{k}), n+i_u, n+i_v \in \cup_{k'>k} \mathcal{T}^{(\mathcal{D})}_{k'} \nonumber
    \end{align}
    Therefore, $d_{n+i_u}, d_{n+i_v} \geqslant p_{i_l}+1$. We then have
    \begin{align}
        & p'_{i_u}=p_{i_v}< p_{i_l}+1 \leqslant d_{n+i_u} \nonumber  \\
        & p'_{i_v}=p_{i_u}< p_{i_l}+1 \leqslant d_{n+i_v}. \nonumber 
    \end{align}
    According to Definition \ref{defn:tour}, the new Hamiltonian cycle is still feasible. We omit the proof for the $\mathcal{D}$-block.
\end{proof}
\noindent \textbf{Proof for Proposition 4.3}
\begin{proof} 
     We look into the $\mathcal{P}$-block $\mathcal{T}^{(\mathcal{P})}_k:i_1 \rightarrow \cdots \rightarrow i_l$ and the $\mathcal{D}$-block $\mathcal{T}^{(\mathcal{P})}_{k'}:n+j_1 \rightarrow \cdots \rightarrow n+j_m, \forall k>k'$ in the tour $T$. We have $\forall i_u \in \mathcal{N}(\mathcal{T}^{(\mathcal{P})}_k),n+j_r \in \mathcal{N}(\mathcal{T}^{(\mathcal{D})}_{k'})$,
    \begin{align}
        & p'_{i_u}=d_{n+j_r}<p_{i_1}\leqslant p_{i_u}<d_{n+i_u} \nonumber \\
        & d'_{n+j_r}=p_{i_u}>d_{n+j_1} \geqslant d_{n+j_r}>p_{j_r}\nonumber 
    \end{align}
    According to Definition \ref{defn:tour}, the new sequence is feasible. 
\end{proof}

\noindent \textbf{Proof for Proposition 4.4}
\begin{proof}
  We have 
    \begin{align}
        p'_i=p_j<d_{n+j} =d'_{n+i}, \ 
        p'_j=p_i<d_{n+i} =d'_{n+j}. \nonumber
    \end{align}
    Therefore, the new Hamiltonian cycle is feasible. 
\end{proof}
\noindent \textbf{Proof for Proposition 4.5}
\begin{proof}
    We denote the indices of $i$ and $n+i$ in the original tour $\mathcal{T}$ as $p_i$ and $d_{n+i}(p_i<d_{n+i})$, respectively. We only need to show that $i$ and $n+i$ can be placed on any positions $p'_i$ and $d'_{n+i}(p'_i<d'_{n+i})$ in the new tour $\mathcal{T}'$ after the execution of node exchange operators. We look into the $\mathcal{P}$-node $i$. We first consider case 1 when $p'_i<p_i$. If the node at position $p'_i$ in tour $\mathcal{T}$ is a $\mathcal{D}$-node, we execute the InterBlock N$_X$O (N2) to swap the $\mathcal{D}$-node and $i$. If the node at position $p'_i$ in tour $\mathcal{T}$ is a $\mathcal{P}$-node, we execute N1 or N3 to swap the $\mathcal{P}$-node and $i$. We then consider case 2 when $p'_i>p_i$. If the node at position $p'_i$ in tour $\mathcal{T}$ is a $\mathcal{P}$-node, we execute N1 or N3 to swap the $\mathcal{P}$-node and $i$. If the node at position $p'_i$ in tour $\mathcal{T}$ is a $\mathcal{D}$-node, we first execute the N$^2_X$O (N3) to swap node $i$ and any $\mathcal{P}$-node after the $\mathcal{D}$-node. The index of $i$ becomes $p_i^{I}$ where $p'_i<p_i^{I}$. We execute the same operators in case 1 and the index of $i$ then becomes $p'_i$. We omit the details of proof regarding any arbitrary $\mathcal{D}$-node $n+i$. 
\end{proof}
\begin{figure*}
    \centering
    \subfloat[$\mathcal{N}=11$]{
   \includegraphics[scale=.32]{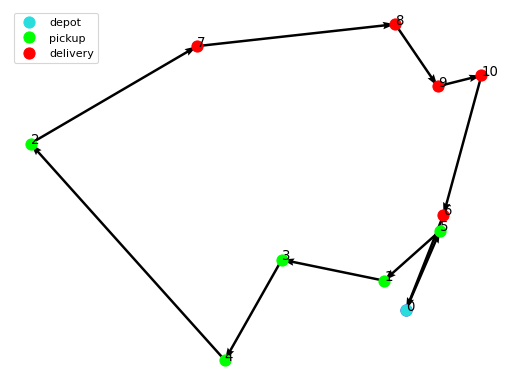}}
   \subfloat[$\mathcal{N}=21$]{
   \includegraphics[scale=.32]{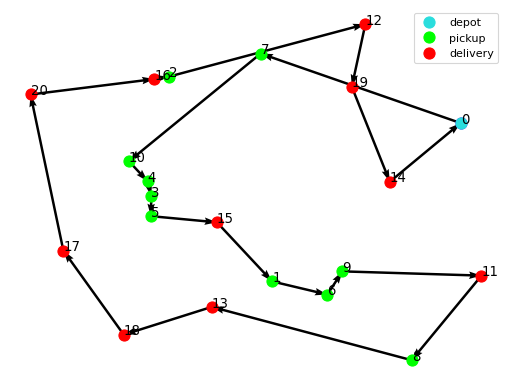}}
   \subfloat[$\mathcal{N}=41$]{\includegraphics[scale=.32]{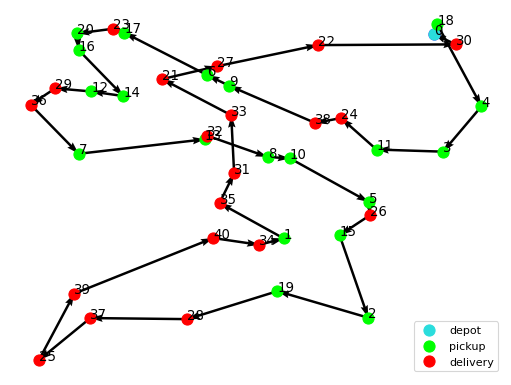}}
   
   \subfloat[$\mathcal{N}=61$]{
   \includegraphics[scale=.32]{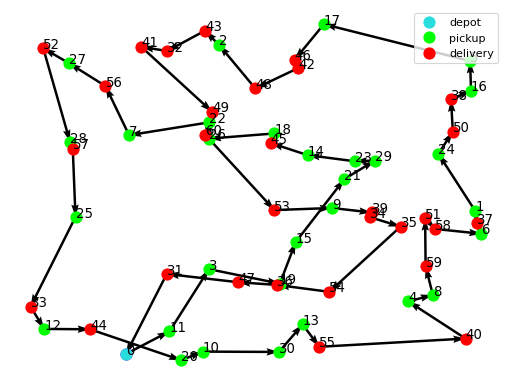}}
   \subfloat[$\mathcal{N}=101$]{\includegraphics[scale=.32]{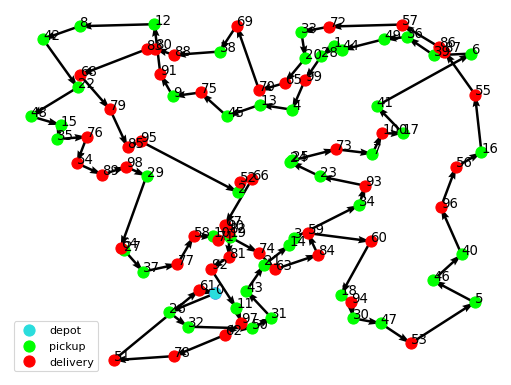}}
    \caption{Minimum cost tour of instances with $|\mathcal{N}|=11,21,41,61,101$}
    \label{fig:opt_tour_app}
\end{figure*}

\noindent \textbf{Proof for Proposition 4.6}
\begin{proof}
    We look into a block sequence $\mathcal{T}^{(\mathcal{P})}_{k_1} \rightarrow \cdots \rightarrow \mathcal{T}^{(\mathcal{P})}_{k_h}$ in a tour. $\mathcal{T}^{(\mathcal{P})}_{k_u}$ and $\mathcal{T}^{(\mathcal{P})}_{k_v} (1 \leqslant u, v \leqslant h)$ are swapped. $\forall i_u \in \mathcal{N}(\mathcal{T}^{(\mathcal{P})}_{k_u}), i_v \in \mathcal{N}(\mathcal{T}^{(\mathcal{P})}_{k_v})$,
    \begin{align}
        & p'_{i_u},p'_{i_v},p_{i_u},p_{i_v}\leqslant \max \{p_{i_h}:i_h \in \mathcal{N}(\mathcal{T}^{(\mathcal{P})}_{k_h}) \}. \nonumber 
    \end{align}
    According to Proposition \ref{prop:tour_precedence}, We have:
    \begin{align}
         n+i_u, n+i_v \in \cup_{k'>k_h} \mathcal{T}^{(\mathcal{D})}_{k'} \nonumber
    \end{align}
    Therefore, $d_{n+i_u}, d_{n+i_v} \geqslant \max \{p_{i_h}:i_h \in \mathcal{N}(\mathcal{T}^{(\mathcal{P})}_{k_h}) \}+1$. We then have
    \begin{align}
        p'_{i_u}< d_{n+i_u}, \ p'_{i_v}< d_{n+i_v}. \nonumber 
    \end{align}
    According to Definition \ref{defn:tour}, the new Hamiltonian cycle is feasible. We omit the proof for the $\mathcal{D}$-block sequence.
\end{proof}
\noindent \textbf{Proof for Proposition 4.7}
\begin{proof} 
     We look into the $\mathcal{P}$-block $\mathcal{T}^{(\mathcal{P})}_{k_r}:i_1 \rightarrow \cdots \rightarrow i_l$ and the $\mathcal{D}$-block $\mathcal{T}^{(\mathcal{D})}_{k_s}:n+j_1 \rightarrow \cdots \rightarrow n+j_m, k_r>k_s$ in the tour $T$. We have $\forall i_u \in \mathcal{N}(\mathcal{T}^{(\mathcal{P})}_{k_r}),n+j_r \in \mathcal{N}(\mathcal{T}^{(\mathcal{D})}_{k_s})$,
    \begin{align}
        & p'_{i_u}< p_{i_l}+1 < d_{n+i_u} \nonumber \\
        & d'_{n+j_r}>d_{n+j_1}-1>p_{j_r}\nonumber 
    \end{align}
    Therefore, the new Hamiltonian cycle is feasible.
\end{proof}



\end{document}